\newcommand{\N}{\mathbb{N}}
\newcommand{\R}{\mathbb{R}}
\newcommand{\keywords}[1]{\textbf{\textit{Keywords ---}} #1}
\newcommand{\LineComment}[1]{\hfill$\rhd\;$\textit{#1}}
\theoremstyle{plain}
\newtheorem{theorem}{Theorem}[section]
\newtheorem{proposition}[theorem]{Proposition}
\newtheorem{lemma}[theorem]{Lemma}
\newtheorem{claim}[theorem]{Claim}
\theoremstyle{plain}
\newtheorem{definition}[theorem]{Definition}
\theoremstyle{remark}
\begin{document}

\title{Smoothed Online Optimization with Unreliable Predictions}

\author[1]{Daan Rutten\thanks{Email: \href{mailto:drutten@gatech.edu}{drutten@gatech.edu}}\textsuperscript{,}}
\author[2]{Nicolas Christianson}
\author[1]{Debankur Mukherjee}
\author[2]{Adam Wierman}

\affil[1]{Georgia Institute of Technology}
\affil[2]{California Institute of Technology}

\maketitle
\keywords{online algorithms, non-convex optimization, competitive analysis}




\begin{abstract}
We examine the problem of smoothed online optimization, where a decision maker must sequentially choose points in a normed vector space to minimize the sum of per-round, non-convex hitting costs and the costs of switching decisions between rounds. The decision maker has access to a black-box oracle, such as a machine learning model, that provides untrusted and potentially inaccurate predictions of the optimal decision in each round. The goal of the decision maker is to exploit the predictions if they are accurate, while guaranteeing performance that is not much worse than the hindsight optimal sequence of decisions, even when predictions are inaccurate. We impose the standard assumption that hitting costs are globally $\alpha$-polyhedral. We propose a novel algorithm, Adaptive Online Switching (AOS), and prove that, for a large set of feasible $\delta > 0$, it is $(1+\delta)$-competitive if predictions are perfect, while also maintaining a uniformly bounded competitive ratio of $2^{\tilde{\mathcal{O}}(1/(\alpha \delta))}$ even when predictions are adversarial. Further, we prove that this trade-off is necessary and nearly optimal in the sense that \emph{any} deterministic algorithm which is $(1+\delta)$-competitive if predictions are perfect must be at least $2^{\tilde{\Omega}(1/(\alpha \delta))}$-competitive when predictions are inaccurate. In fact, we observe a unique threshold-type behavior in this trade-off: if $\delta$ is not in the set of feasible options, then \emph{no} algorithm is simultaneously $(1 + \delta)$-competitive if predictions are perfect and $\zeta$-competitive when predictions are inaccurate for any $\zeta < \infty$. Furthermore, we discuss that memory is crucial in AOS by proving that any algorithm that does not use memory cannot benefit from predictions. We complement our theoretical results by a numerical study on a microgrid application.
\end{abstract}

\section{Introduction}

We consider online (non-convex) optimization with switching costs in a normed vector space $(X, \lVert \cdot \rVert)$ wherein, at each time $t$, a decision-maker observes a non-convex \emph{hitting cost} function $f_t : X \to [0, \infty]$ and must decide upon some $x_t \in X$, paying $f_t(x_t) + \lVert x_t - x_{t-1} \rVert$, where $\lVert \cdot \rVert$ characterizes the \emph{switching cost}. Throughout, we assume that $f_t$ is globally $\alpha$-polyhedral (see Definition \ref{def:polyhedral}). Moreover, we assume that the decision maker has access to an \emph{untrusted prediction} $\tilde{x}_t$ of the optimal decision during each round, such as the decision suggested by a black-box AI tool for that round.

Online optimization is a problem with applications to many real-world problems \cite{lin2012dynamic, lu2012simple, kim2014real, cover1991universal, bansal2003online, wang2014exploring, joseph2012jointly, zanini2009multicore, zanini2010online}.
Usually there is a penalty for switching decisions too often: the goal of a decision maker is not just to minimize the hitting cost functions, but to also minimize the switching cost between rounds.
Online optimization with switching costs has received considerable attention in the learning, networking, and control communities in recent years \cite{lin2012online, goel2017thinking, goel2019online, li2018using, altschulerOnlineLearningFinite2018, shiCompetitiveOnlineConvex2021}. Moreover, the crux of many fundamental problems in online algorithms, such as the $k$-server problem \cite{koutsoupias1995k}, metrical task systems \cite{borodinOptimalOnlineAlgorithm1992, blum2000line}, and online convex body chasing \cite{sellkeChasingConvexBodies2020, argueChasingConvexBodies2021} is to handle switching costs.

The bulk of the literature on online optimization with switching costs has sought to design \emph{competitive} algorithms for the task, i.e., algorithms with finite competitive ratios. At this point, competitive algorithms for scenarios with both convex and non-convex hitting costs have been developed \cite{chen2018smoothed,lin2020online, zhang2021revisiting}.  However, while competitive analysis yields strong performance guarantees, it has often been criticized as being unduly pessimistic, since algorithms are characterized by their \emph{worst-case} performance, while worst-case conditions may never occur in practice.  As a result, an algorithm with an improved competitive ratio may not actually lead to better performance in realistic scenarios.

On the other hand, many real-world applications have access to vast amounts of historical data which could be leveraged by modern black-box AI tools in order to achieve significantly improved performance in the typical case. For example, in the context of capacity scaling for data centers, the historical data reveals reoccurring patterns in the weekly load of a data center.  AI models that are trained on these historical patterns can potentially outperform competitive algorithms in the typical case. This approach has been successfully used by Google in data center cooling \cite{deepmind2016}. 

Making use of modern black-box AI tools is potentially transformational for online optimization; however, such machine-learned algorithms fail to provide any uncertainty quantification and thus do not have formal guarantees on their worst-case performance. As such, while their performance may improve upon competitive algorithms in typical cases, they may perform arbitrarily worse in scenarios where the training examples are not representative of the real world workloads due to, e.g., distribution shift. This represents a significant drawback when considering the use of AI tools for safety-critical applications. 

A challenging open question is whether it is possible to provide guarantees that allow black-box AI tools to be used in safety-critical applications.  This paper aims to provide an algorithm that can achieve the best of both worlds -- making use of black-box AI tools to provide good performance in the typical case while integrating competitive algorithms to ensure formal worst-case guarantees.  We formalize this goal using the notions of \emph{consistency} and \emph{robustness} introduced by \cite{lykouris2018competitive} and used in the emerging literature on untrusted advice in online algorithms \cite{lykouris2018competitive, kumar2018semi, mitzenmacher2018model, hsu2019learning, rutten2021new, sun2021pareto, maghakian2021leveraging, lee2021online}. We consider that the online algorithm is given untrusted advice/predictions, e.g., the output of a black-box AI tool.  The predictions are fully arbitrary, i.e., we impose no statistical assumptions on the predictions, and the decision maker has no \emph{a priori} knowledge of the predictions' accuracy. The goal of the decision maker is to be both \emph{consistent} -- that is, to achieve performance comparable to that of the predictions if they are accurate, while remaining \emph{robust}, i.e., having cost that is never much worse than the hindsight optimal, even if predictions are completely inaccurate. Thus, an algorithm that is consistent and robust is able to match the performance of the black-box AI tool when the predictions are accurate while also ensuring a worst-case performance bound (something black-box AI tools cannot provide).  \\


\textbf{Main contributions.} 
We make five main contributions in this paper.
\emph{First}, 
we identify a fundamental trade-off between consistency and robustness for any deterministic algorithm. If an algorithm is $(1 + \delta)$-consistent, then this implies a lower bound on its robustness guarantee. In fact, we identify a region of $(\alpha, \delta)$, called the \emph{infeasible region} for which no algorithm can be simultaneously $(1 + \delta)$-consistent and $\zeta$-robust for any $\zeta < \infty$ (if the hitting costs are globally  $\alpha$-polyhedral). For any $(\alpha, \delta)$ outside the infeasible region, we prove that there is an exponential trade-off between consistency and robustness. More formally, Theorem \ref{thm:lowerbound} proves that any algorithm that is $(1 + \delta)$-consistent must be at least
\begin{equation}
    \frac{\alpha \delta}{4} \left( \frac{2}{\alpha + \delta (1 + \alpha)} \right)^{\frac{2 - \alpha (1 - \delta^2)}{\alpha \delta (1 + \delta)}} - \mathcal{O}(1) \quad \text{-robust}.
\end{equation}
This threshold-type behavior is unprecedented in the literature on learning-augmented algorithms and reveals the hardness of the problem instance brought about by the non-convexity of hitting costs.

\emph{Second,} we introduce a new algorithm for online non-convex optimization in a normed vector space with untrusted predictions, \emph{Adaptive Online Switching} (AOS), and provide bounds on its consistency and robustness. Our analysis shows that AOS can be used in combination with a black-box AI tool to match the performance of the black-box AI while also ensuring provable worst-case guarantees. 

The AOS algorithm works as follows. At each time $t$, AOS either follows the predictions $\tilde{x}_t$ or adopts a robust strategy that does not adapt to the predictions, and adaptively switches between these two. The challenge in the design of AOS is that, on the one hand, switching must be infrequent in order to limit the switching cost, but, on the other hand, switching must be frequent enough to ensure that the algorithm does not get stuck following a suboptimal sequence of decisions from either the predictions or the robust strategy. 

 Theorem \ref{thm:main} proves that, for $\alpha$-polyhedral hitting costs (see Definition \ref{def:polyhedral}), the competitive ratio $\textsc{CR}(\eta)$ of AOS is a function of the accuracy $\eta$ of the predictions and is at most
\begin{equation}
    \textsc{CR}(\eta) \leq (1 + \delta + \gamma) (1 + 2 \eta).
\end{equation}
Here, $\eta$ is an appropriate measure of the accuracy of the predictions (see Definition \ref{defn:accuracy}) and relates to the distance between the prediction $\tilde{x}_t$ and the optimal decision, and $\delta$ and $\gamma$ are hyperparameters of the algorithm. Note that, even though the competitive ratio is a function of the accuracy, the algorithm is oblivious to it. If the predictions are accurate, i.e., if $\eta = 0$, then the competitive ratio of AOS is $1 + \delta + \gamma$. In other words, AOS is $(1 + \delta + \gamma)$-consistent and almost reproduces the hindsight optimal sequence of decisions if the predictions are accurate. Moreover, if $(\alpha, \delta)$ belongs to the \emph{feasible region}, then
\begin{equation}
    \textsc{CR}(\eta) \leq \frac{12 + o(1)}{\gamma} \left( \frac{2}{\alpha + \delta (1 + \alpha)} \right)^{2 / (\alpha \delta)}.
\end{equation}
Therefore, even if predictions are completely inaccurate, i.e., if $\eta = \infty$, the competitive ratio of AOS is uniformly bounded.
The trade-off between consistency and robustness is characterized by the confidence hyperparameters $\delta$ and $\gamma$, where the robustness bound depends exponentially on both $\delta$ and $\alpha$. In light of our lower bound, this means that AOS reproduces the order optimal trade-off between robustness and consistency in the feasible region. As a proof technique, the conventional potential function approach fails due to the non-convexity of the problem and the incorporation of predictions. Hence, significant novelty in the technique is required for the proof of Theorem \ref{thm:main} (see Section \ref{sec:aos} for a discussion).

We complement the theoretical analysis of AOS's performance with empirical validation in Section \ref{sec:numerics}, where we report on experiments using AOS to ``robustify'' the decisions of a machine-learned algorithm for the problem of optimal microgrid dispatch with added noise and distribution shift in the predictions. These experiments confirm that AOS effectively bridges the good average-case performance of black-box AI with worst-case robust performance.

\emph{Third,} we extend the above results to the case when only an approximate non-convex solver is available. As we do not make any assumptions on the convexity of $f_t$, it may be computationally difficult or simply impossible to obtain the exact minimizer of $f_t$. We therefore extend AOS to work with any non-exact, approximate minimizer of $f_t$. AOS is oblivious to the accuracy of the solver and we prove that the competitive ratio decays smoothly in the accuracy of the solver. Moreover, we provide bounds for the case when predictions cannot be used in every time step due to the computational expense associated with the non-convex functions.  Our bounds characterize how the consistency-robustness tradeoff is impacted by this computational constraint. In fact, there the impact on the competitive ratio is linear in the number of time steps between available predictions.

\emph{Fourth,} we characterize the importance of memory for algorithms seeking to use untrusted predictions.  Interestingly, AOS requires full memory of all predictions.  This is a stark contrast with well-known memoryless algorithms for online optimization with switching costs, which do not make use of any information about previous hitting costs or actions. We prove in Theorem~\ref{thm:memoryless} that this contrast is no accident and that memory is needed in order to simultaneously achieve robustness and consistency. Thus, \emph{memory is necessary to benefit from untrusted predictions.}

\emph{Fifth,} we consider an important special case when the vector space is $X = \R$ and each function is convex and show that it is possible to provide an improved trade-off between robustness and consistency using a memoryless algorithm in this special case.  The one-dimensional online convex optimization setting has received significant attention in the literature, see \cite{bansal2015competitive, albersOptimalAlgorithmsRightSizing2018}. In this context, we introduce a new algorithm, called Adaptive Online Balanced Descent (AOBD), which is  inspired by Online Balanced Descent \cite{chen2015online}. AOBD has two tunable hyperparameters $\bar{\beta} \geq \underline{\beta} > 0$ that represent the confidence of the decision maker in the predictions.
Theorem \ref{thm:onedim} proves that the competitive ratio $\textsc{CR}(\eta)$ of AOBD is at most
\begin{equation}
    \textsc{CR}(\eta) \leq \min\{ \left( 1 + (2 + \bar{\beta}^{-1}) \underline{\beta} \right) (1 + 2 \eta), 1 + (2 + \underline{\beta}^{-1}) \bar{\beta} \}.
\end{equation}
The competitive ratio of AOBD has a similar structure to AOS, but improves the robustness bound significantly by taking advantage of the additional structure available compared to the general non-convex case. In particular, if $\bar{\beta} = 1 / \delta$ and $\underline{\beta} = \delta / (2 + \delta)$ for $\delta \leq 2$, then AOBD is $(1 + \delta)$-consistent and $1 + 3 / \delta + 2 / \delta^2$-robust. The result is complemented by a lower bound in Theorem \ref{thm:lbonedim} that proves that for $0<\delta<1/2$, any algorithm that is $(1 + \delta)$-consistent must be at least $1 / (2 \delta)$-robust. 

\section{Model Description}

We examine the problem of online non-convex optimization with switching costs on an arbitrary normed vector space $(X, \lVert \cdot \rVert)$. In this problem, the decision maker starts at an arbitrary point $x_0 \in X$ and is presented at each time $t = 1, \dots, T$ with a non-convex function $f_t : X \to [0, \infty]$. The decision maker must then choose an $x_t \in X$ and pay cost $f_t(x_t) + \lVert x_t - x_{t-1} \rVert$, i.e., the sum of the \emph{hitting cost} of $x_t$ and the \emph{switching cost} from $x_{t-1}$ to $x_t$. The goal of the decision maker is to solve
\begin{equation}\label{eq:prob-desc}
    \min_{x_t, 1 \leq t \leq T} \sum_{t = 1}^T \left( f_t(x_t) + \lVert x_t - x_{t-1} \rVert \right)
    \quad \text{s.t.} \quad x_t \in X \quad \text{ for } 1 \leq t \leq T.
\end{equation}
Note that the input is presented in an online fashion: at time $t$, the algorithm only knows $f_1, \dots, f_t$, but does not know $f_{t+1}, \dots, f_T$ or the finite time horizon $T$. We call a collection $(x_0, f_1, \dots, f_T, T)$ an \emph{instance} of the online non-convex optimization problem. The performance of the decision maker is compared to the hindsight optimal sequence of decisions, defined as
\begin{equation}
\label{eq:offline_opt}
    (o_1, \dots, o_T) \in \underset{y_1, \dots, y_T \in X}{\arg\min} \sum_{t=1}^T \left( f_t(y_t) + \lVert y_t - y_{t-1} \rVert \right),
\end{equation}
where we assume that the minimizer exists. 
We denote by $\textsc{Opt}(t) := f_t(o_t) + \lVert o_t - o_{t-1} \rVert$ the cost of the hindsight optimal algorithm at time $t$. 

We would like to emphasize the generality of our model. 
The decisions take values in an arbitrary normed vector space and the hitting cost functions can be non-convex, allowing our results to be applied to online decision-making in a variety of settings, including non-Euclidean and function spaces, as well as to problems where decisions are inherently discrete in nature such as right-sizing data centers~\cite{albersAlgorithmsRightSizingHeterogeneous2021}. Throughout, unless otherwise mentioned, we assume that the hitting cost functions are globally $\alpha$-polyhedral for some $\alpha > 0$, defined as follows.

\begin{definition}
\label{def:polyhedral}
We say that a function $f_t: X \to [0, \infty]$ is \textbf{globally $\alpha$-polyhedral} if it has a unique minimizer $v_t \in X$, and, for all $x \in X$,
$
    f_t(x) \geq f_t(v_t) + \alpha \cdot \lVert x - v_t \rVert.
$
\end{definition}

The assumption that hitting cost functions are $\alpha$-polyhedral is standard in the literature on online optimization with switching costs \cite{chen2018smoothed, lin2020online, zhang2021revisiting}. This type of structural assumption on hitting costs is in fact necessary to ensure the existence of a competitive algorithm for online optimization with switching costs on general vector spaces.\footnote{If hitting costs are allowed to be arbitrary, or if the only imposed assumption is that hitting costs are convex, then the problem class contains convex body chasing (CBC) as a special case \cite{sellkeChasingConvexBodies2020}. The competitive ratio of any algorithm for CBC on any $d$-dimensional normed vector space is $\Omega(\sqrt{d})$, so no algorithm can be competitive for CBC on an infinite-dimensional vector space.} The assumption that the switching cost is a metric is also crucial to our results; as we show in Appendix \ref{appendix:soco_squared_dist}, it is impossible to achieve simultaneous robustness and non-trivial consistency when the switching cost is a Bregman divergence, another common choice of switching cost, of which the squared Euclidean norm is a special case \cite{goel2019beyond}.

The fact that the hitting cost may be non-convex introduces new algorithmic challenges. For example, one cannot simply interpolate two points to obtain a convex combination of the hitting cost as done in state-of-the-art literature on online optimization without predictions \cite{bansal2015competitive, chen2018smoothed, goel2019beyond}. The non-convexity also poses computational challenges as the true minimizer of a non-convex optimization problem may not be attainable. Throughout, we therefore let our algorithms rely on approximate solvers, defined as follows.

\begin{definition}
Let $g: X \to [0, \infty]$ be any non-convex function. We say that $r$ \textbf{approximately solves} $g$, denoted by $r \approx \arg\min_{x \in X} g(x)$, if $r$ is the result of any non-convex solver applied to $g$. We define the \textbf{accuracy} of the solution $r$ as
\begin{equation}
    \varepsilon(r) := \frac{g(r)}{\min_{x \in X} g(x)} - 1.
\end{equation}
Also, if $r_1, r_2, \dots, r_T$ is the result of a sequence of non-convex solvers, we let $\varepsilon(r) := \max_{1 \leq t \leq T} \varepsilon(r_t)$.
\end{definition}

Although our performance guarantees naturally rely on the accuracy of the approximate solution, our algorithms do not require knowledge of the accuracy upfront.


\subsection{Untrusted Predictions} We assume that the decision maker has access to potentially inaccurate predictions of the hindsight optimal sequence of decisions. At time $t$, before choosing an action, the decision maker receives a suggested action $\tilde{x}_t \in X$.
We want to emphasize that these predictions are of the \emph{optimal decisions}, and not of the hitting cost functions, as has been studied previously \cite{chen2015online, chen2016using, li2018using, li2020leveraging, lin2020online}. This new model captures scenarios where, for example, a black-box machine learning algorithm is available and outputs $\tilde{x}_t$ as the suggested action for time $t$. Alternatively, $\tilde{x}_t$ could be the result of using imperfect forecasts of future hitting cost functions via a look-ahead algorithm for online optimization such as SFHC \cite{lin2020online}.

The accuracy of the predictions is measured in terms of the distance to the hindsight optimal sequence of decisions, which we quantify as follows. 
\begin{definition}
\label{defn:accuracy}
Consider an instance $(x_0, f_1, \dots, f_T, T)$ for the online non-convex optimization problem and let $\tilde{x} = (\tilde{x}_1, \dots, \tilde{x}_T) \in X^T$ be a prediction sequence. We say that the prediction $\tilde{x}$ is \textbf{$\eta$-accurate} for the instance if
\begin{equation}
\label{eq:eta-accurate}
    \sum_{t = 1}^T \lVert o_t - \tilde{x}_t \rVert \leq \eta \sum_{t = 1}^T \textsc{Opt}(t),
\end{equation}
where $o_t$ is the optimal sequence of decisions as defined in \eqref{eq:offline_opt}.
\end{definition}

Note that the accuracy is normalized in terms of the total cost of the hindsight optimal algorithm to make the notion scale-invariant. 
For example, if the norm $\lVert \cdot \rVert$ is doubled, then both the left-hand side of \eqref{eq:eta-accurate} and the optimal cost double, and hence, $\eta$ stays the same. This is natural since the quality of the predictions in this case does not change.

\subsection{Defining Consistency and Robustness} We measure the performance of an algorithm using the competitive ratio, which is a function of the accuracy in our setting.


\begin{definition} \label{defn:competitive_ratio}
Let $\mathcal{A}$ be an algorithm for the online non-convex optimization problem in \eqref{eq:prob-desc} that adapts to the predictions. The \textbf{competitive ratio} of $\mathcal{A}$ is $\textsc{CR}(\eta)$, or $\mathcal{A}$ is $\textsc{CR}(\eta)$\textbf{-competitive}, if
\begin{equation} \label{eq:competitive_ratio_defn}
    \sum_{i = 1}^T \left( f_t(x_t) + \lVert x_t - x_{t-1} \rVert \right) \leq \textsc{CR}(\eta) \cdot \sum_{t = 1}^T \textsc{Opt}(t),
\end{equation}
for all instances $(x_0, f_1, \dots, f_T, T)$ and all $\eta$-accurate predictions $\tilde{x}$. 

\end{definition}

The aim of this work is to design an algorithm for which the competitive ratio improves with the quality of the predictions. We quantify this in terms of the notions of \emph{consistency} and \emph{robustness}, which have recently emerged as important measures for the ability of algorithms to effectively make use of untrusted predictions in online settings \cite{lykouris2018competitive, kumar2018semi, mitzenmacher2018model, hsu2019learning, rutten2021new, sun2021pareto, maghakian2021leveraging, lee2021online}.

\begin{definition}
Let $\mathcal{A}$ be an algorithm for the online non-convex optimization problem in \eqref{eq:prob-desc} and let $\textsc{CR}(\eta)$ be its competitive ratio when it has access to $\eta$-accurate predictions. We say that $\mathcal{A}$ is
\\
{\normalfont (a)} \textbf{$(1 + \delta)$-consistent} if $\textsc{CR}(0) \leq 1 + \delta$; $\quad$
{\normalfont (b)} \textbf{$\zeta$-robust} if $\textsc{CR}(\eta) \leq \zeta$ for any $\eta \in [0, \infty]$.
\end{definition}

An algorithm with these qualities of consistency and robustness is guaranteed to have near-optimal performance when predictions are perfect, along with a constant competitive ratio even when predictions are arbitrarily bad. If predictions are, for example, the decisions made by a black-box AI algorithm, a robust and consistent algorithm ensures a worst-case performance guarantee without having to sacrifice the AI algorithm's typically excellent performance. Moreover, the algorithm we propose in this work has performance that \emph{degrades gracefully in $\eta$}: that is, even if predictions are not exactly perfect but are near-optimal, our algorithm will still achieve near-optimal cost.

\subsection{Warmup: Achieving Consistency Without Robustness}

By definition, following the predictions exactly guarantees 1-consistency. However, if the hitting cost functions are steep and predictions are not perfect, naively following the predictions does not yield a competitive ratio with a smooth dependence on the accuracy $\eta$. By suitably ``filtering'' the predictions, it is possible to achieve a competitive ratio that is linear in $\eta$. 

\begin{algorithm}[t]
\begin{algorithmic}
\STATE $p_0 \leftarrow x_0$
\FOR{$t = 1, \ldots, T$}
    \STATE $p_t \approx \underset{p \in X}{\arg\min}\, f_t(p) + 2 \lVert p - \tilde{x}_t \rVert$ \label{eq:ftp}
\ENDFOR
\end{algorithmic}
\caption{Follow the Prediction (FtP)}
\label{alg:ftp}
\end{algorithm}

To this end, we present in Algorithm \ref{alg:ftp} the filtering procedure called \emph{Follow the Prediction} ($\textsc{FtP}$) that, at each time~$t$, moves to a point $p_t$ constituting a ``filtered'' form of the prediction $\tilde{x}_t$. The $\textsc{FtP}$ algorithm is the same algorithm in as in \cite{antoniadisOnlineMetricAlgorithms2020}, with an adjustment so that it uses an approximate solver rather than an exact solver, since an exact solver may not be available for non-convex problems. As we shall see in the following lemma, the competitive ratio of \textsc{FtP} is linear in $\eta$.

\begin{lemma}
\label{lemma:ftpalg}
Let $\textsc{CR}(\eta)$ be the competitive ratio of \textsc{FtP}. Then,
\begin{equation} \label{eq:ftp_cr_bound}
    \textsc{CR}(\eta) \leq 1 + \varepsilon(p) + \left( 4 + 2 \varepsilon(p) \right) \eta.
\end{equation}
\end{lemma}

Note that Lemma \ref{lemma:ftpalg} does not guarantee that $\textsc{FtP}$ is worst-case competitive, a.k.a.\ robust, and in fact, it is relatively straightforward to construct example settings where $\eta$ is arbitrarily large and $\textsc{FtP}$ has an unbounded competitive ratio.

\section{Main Results}


Although consistency and robustness are nontrivial to achieve simultaneously, it is straightforward to obtain consistency and robustness individually. On the one hand, we saw in the previous section that FtP guarantees consistency, but not robustness. On the other hand, an algorithm that follows the minimizers $v_t$ is $\max\{2 / \alpha, 1\}$-robust \cite{zhang2021revisiting}, but not consistent. Hence, our goal in this section is to combine the features of both algorithms to construct an algorithm that is both consistent and robust.

A natural algorithm to consider is a `switching type' algorithm, which chooses between following the minimizers and the predictions. These switching-type algorithms have already been used successfully in general problem settings \cite{antoniadisOnlineMetricAlgorithms2020}. In fact, due to the non-convexity of the hitting cost functions, any reasonable algorithm must be a switching-type algorithm, since the cost of any state in between the minimizer $v_t$ and the prediction $\tilde{x}_t$ is generally unrelated to and potentially much higher than the hitting cost at $v_t$ or $\tilde{x}_t$ itself. The difficulty of designing these algorithms in our case lies in the notion of consistency, which dictates that the cost of the algorithm must only be a small fraction higher than the cost of the predictions. This requires a careful design of the algorithm, which should only switch away from the prediction if it holds a strong conviction that robustness would otherwise be violated.

\begin{algorithm}[ht]
\begin{algorithmic}[1]
\STATE Let $r_t \approx \arg\min_{r \in X} f_t(r)$, $\textsc{Rob}(t) := f_t(r_t) + \lVert r_t - r_{t-1} \rVert$, and \\ \noindent $\textsc{Adv}(t) := f_t(p_t) + \lVert p_t - p_{t-1} \rVert$ for $t = 1, 2, \dots, T$.
\STATE $T_1 \leftarrow 1, \; t \leftarrow 1$\;
\FOR{$k = 1, 2, \dots$}
    \WHILE{$\sum_{i = T_k}^{t-1} \textsc{Adv}(i) + \textsc{Rob}(t) + \lVert p_{t-1} - r_{t-1} \rVert + \lVert r_t - p_t \rVert \geq (1 + \delta) \sum_{i = T_k}^t \textsc{Adv}(i)$} \label{line:whileadv}
        \STATE $x_t \leftarrow p_t, \; t \leftarrow t + 1$ and stop if $t = T + 1$
    \ENDWHILE
    \STATE $M_k \leftarrow t$ \LineComment{Start following the minimizers}
    \STATE $x_t \leftarrow r_t, \; t \leftarrow t + 1$ and stop if $t = T + 1$
    \WHILE{$\sum_{i = M_k+1}^t \textsc{Rob}(i) {+} \lVert r_t - p_t \rVert {-} \lVert r_{M_k} - p_{M_k} \rVert \leq (1 + \delta) \sum_{i = M_k+1}^t \textsc{Adv}(i) + \gamma \sum_{i = T_k}^t \textsc{Adv}(i)$} \label{line:whilerob}
        \STATE $x_t \leftarrow r_t, \; t \leftarrow t + 1$ and stop if $t = T + 1$
    \ENDWHILE
    \STATE $T_{k+1} \leftarrow t$ \LineComment{Start following the predictions}
    \STATE $x_t \leftarrow p_t, \; t \leftarrow t + 1$ and stop if $t = T + 1$
\ENDFOR
\end{algorithmic}
\caption{Adaptive Online Switching (AOS)}
\label{alg:main}
\end{algorithm}

We propose Algorithm~\ref{alg:main}, named \emph{Adaptive Online Switching} (AOS), which adaptively switches between following the approximate minimizers and filtered versions of the predictions in a manner that ensures both robustness and consistency. The conditions in lines \ref{line:whileadv} and \ref{line:whilerob} dictate when the algorithm should switch and are carefully designed to simultaneously guarantee consistency and robustness. The AOS algorithm has hyperparameters $\delta, \gamma>0$, which represents the confidence in the predictions. The smaller $\delta$ and $\gamma$, the closer the performance to the predictions, while sacrificing robustness in the worst-case.

\subsection{Bounding Consistency and Robustness}
\label{sec:aos}

We prove that AOS is always consistent. However, to our surprise, AOS is only robust for specific choices of $\delta$, which are dependent on the value of $\alpha$. If $\delta$ is too small, or in other words, if AOS trusts the predictions too much, then AOS is not robust. There is no such restriction the parameter $\gamma$. The $(\alpha, \delta)$ region for which AOS is robust is called the feasible region, as defined below.

\begin{definition}
Fix any $(\alpha, \delta)$ and let $U(t)$ be given by
\begin{equation}
\begin{aligned}
\label{eq:duallp}
   U(t) := &\min_{U, y} \quad U \\
   \text{s.t.} \quad \delta \sum_{i = s}^t (1 + \alpha (i - s + 1)) y_i + \alpha \sum_{i = s}^{t-1} y_i &- 2 \sum_{i = s+1}^t y_i \geq 1 + \alpha (t - s + 1) &\text{ for } 1 \leq s \leq t, \\
   U \geq 2 y_s + \delta \sum_{i = s}^t y_i &- 1, \;
   y_s \geq 0 &\text{ for } 1 \leq s \leq t.
\end{aligned}
\end{equation}
We say that $(\alpha, \delta)$ is \textbf{feasible} if $\sup_{t \geq 1} U(t) < \infty$.
\end{definition}

\begin{figure}
    \centering
    \includegraphics[width=0.6\textwidth]{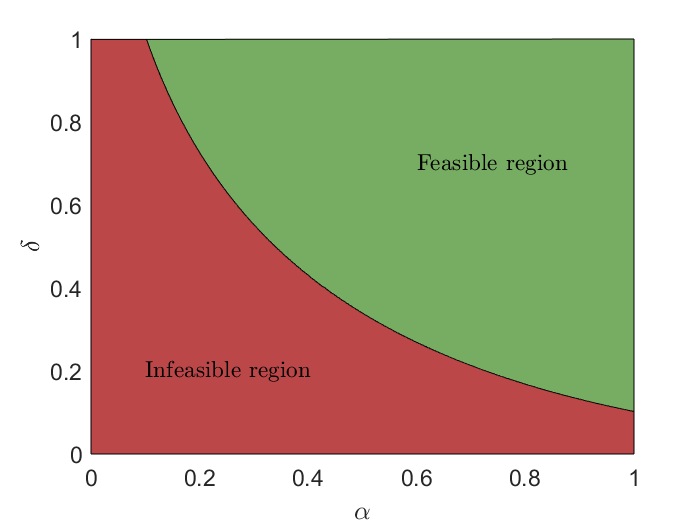}
    \caption{The feasible and infeasible region for $0 \leq \alpha \leq 1$ and $0 \leq \delta \leq 1$.}
    \label{fig:feasibleregion}
\end{figure}

Although a closed-form characterization of the feasible region is not available, numerically computing the feasible region is quite fast. Figure \ref{fig:feasibleregion} shows the feasible region for $0 \leq \alpha \leq 1$ and $0 \leq \delta \leq 1$. For example, $\delta = 0.5$ is feasible for $\alpha = 0.5$ but not for $\alpha = 0.2$. The value for which consistency is achievable therefore critically relies on $\alpha$. The next theorem shows that AOS is both consistent and robust in this region and the parameters $\delta$ and $\gamma$ conveniently control the trade-off between performance in the typical case when $\eta$ is small and the worst-case when $\eta$ is large, in hindsight.

\begin{theorem}
\label{thm:main}
Let $\textsc{CR}(\eta)$ be the competitive ratio of AOS. Then,
\begin{equation}
    \textsc{CR}(\eta) \leq (1 + \delta + \gamma) (1 + \varepsilon(p) +  (4 + 2 \varepsilon(p)) \eta).
\end{equation}
Moreover, if $(\alpha, \delta)$ is feasible and $\frac{2}{\alpha \delta} \in \N$. Then,
\begin{equation}
    \textsc{CR}(\eta) \leq \frac{12 + o(1)}{\gamma} \left( 1 + \varepsilon(r) + \frac{2 \varepsilon(r)}{\alpha} \right) \left( \frac{2}{\alpha + \delta (1 + \alpha)} \right)^{2 / (\alpha \delta)}.
\end{equation}
\end{theorem}

The assumption that $2 / (\alpha \delta) \in \N$ is without loss of generality and prevents rounding symbols from appearing in the notation. The asymptotic notation $o(1)$ refers to the asymptotic regime $\alpha, \delta, \gamma \to 0$. An exact, non-asymptotic characterization of the competitive ratio is available in the appendix.
The competitive ratio of AOS is the minimum of two terms. If the predictions and the non-convex solver are accurate, i.e., $\eta$ and $\varepsilon(p)$ are close to $0$, then the competitive ratio will be close to $1 + \delta + \gamma$. In particular, this means that the AOS algorithm is $(1 + \delta + \gamma)$-consistent if $\varepsilon(p) = 0$. Moreover, even for inaccurate predictions, i.e., as $\eta \to \infty$, the AOS algorithm has a bounded competitive ratio, implying robustness. These simultaneous properties of AOS enable its use to filter the decisions of untrusted, black-box AI algorithms, yielding worst-case performance guarantees without giving up good average-case performance in safety-critical settings.


The confidence hyperparameters $\delta$ and $\gamma$ describe how much faith the algorithm designer has in the predictions and consequently influences the performance guarantees. 
As mentioned before, $\delta$ and $\gamma$ describe the trade-off between the performance in the cases when predictions are accurate and completely inaccurate, respectively. Theorem \ref{thm:main} explicitly describes the trade-off between these two scenarios and interpolates to any scenario in between. 

It is worthwhile to highlight that the proof of Theorem \ref{thm:main} does not follow the conventional potential function approach, since this approach is hindered by its dependence on both the optimal trajectory and the trajectory of the predictions (as opposed to only the optimal trajectory in conventional competitive analysis). Although the consistency bound is not hard to prove, the proof of the robustness bound is quite involved and requires significant novelty. For example, even the fact that the AOS algorithm eventually switches to the minimizers and does not follow the advice forever, is not obvious. The reason that the AOS algorithm eventually switches to the robust algorithm and is robust itself critically depends on the globally $\alpha$-polyhedral assumption. Intuitively, to see why the AOS algorithm will switch eventually, consider the following example. Assume that $r_t = v_t$, the minimizer $v_t$ is static and the advice moves a distance of 1 away from the minimizer in each round. Then, the accumulated switching cost is equal to $t$ at time $t$. However, the accumulated hitting cost is at least $\alpha \sum_{i = 1}^t i = \alpha t (t + 1) / 2$ at time $t$. Hence, the accumulated hitting cost increases \emph{quadratically}, while the accumulated switching cost increases \emph{linearly}. In general, the accumulated hitting cost increases \emph{at a faster rate} than the accumulated switching cost. As the distance to the minimizer is upper bounded by the accumulated switching cost, this means that there must be a $t$ for which
\begin{equation}
    \lVert p_{t-1} - r_{t-1} \rVert + \lVert r_t - p_t \rVert
    = t-1 + t
    < \frac{\alpha \delta (t-1) t}{2}
    \leq \delta \sum_{i = 1}^{t-1} \textsc{Adv}(t),
\end{equation}
which is sufficient to invalidate the condition in line \ref{line:whileadv}, and hence, the AOS algorithm will switch to the robust algorithm. In general, not only the time until a switch is bounded, one can even bound the cost of the advice until this time is reached by viewing the problem as a non-convex optimization problem, where the goal of the adversary is to maximize the cost of the AOS algorithm while minimizing the cost of the optimal trajectory simultaneously. We prove that there exists a simpler linear optimization problem that forms a tight bound on this non-convex problem, which leads to the definition of $U(t)$ in \eqref{eq:duallp}. The proof of Theorem \ref{thm:main} can be found in Appendix \ref{app:mainproof}.

\subsection{A Lower Bound}

To understand whether it is possible to improve on the performance of AOS, we turn our attention to a lower bound. We show in this section that there is a fundamental trade-off between consistency and robustness for \emph{any} algorithm and that this trade-off is exponential. Moreover, if $\alpha$ and $\delta$ are infeasible then \emph{no} algorithm can be simultaneously $(1 + \delta)$-consistent and $\zeta$-robust for $\zeta < \infty$. We define the infeasibility region as follows.

\begin{definition}
Fix any $(\alpha, \delta)$ and let $L(t)$ be given by
\begin{equation}
\begin{aligned}
\label{eq:lboptproblem}
    L(t) := \max_{\Delta} \quad &\sum_{i = 1}^t \left( \Delta_i + \alpha \left( 1 + \sum_{j = 1}^i \Delta_j \right) \right) \\
    \text{s.t.} \quad 2 \left(1 + \sum_{i = 1}^{s-1} \Delta_i \right) &\geq \delta \sum_{i = 1}^s \left( \Delta_i + \alpha \left( 1 + \sum_{j = 1}^i \Delta_j \right) \right) + \alpha \left( 1 + \sum_{i = 1}^s \Delta_i \right) &\text{ for } 1 \leq s \leq t, \\
    \Delta_s &\geq 0 &\text{ for } 1 \leq s \leq t.
\end{aligned}
\end{equation}
We say that $(\alpha, \delta)$ is \textbf{infeasible} if $\sup_{t \geq 1} L(t) = \infty$.
\end{definition}

Note that we have a closed-form solution for neither the feasible nor the infeasible region. As a result, it is not possible to verify analytically whether each $(\alpha, \delta)$ is feasible if it is not infeasible. We therefore compute and compare the feasible and infeasible region numerically, which is shown in Figure \ref{fig:feasibleregion}. The feasible and infeasible region leave only a very small gap, which we verify numerically is less than $10^{-3}$. The next theorem shows that no algorithm can be robust inside the infeasible region and any algorithm must have an exponential trade-off outside the infeasible region.

\begin{theorem}
\label{thm:lowerbound}
Fix any $\delta > 0$, and assume that $(2 - \alpha (1 - \delta^2)) / (\alpha \delta (1 + \delta)) \in \N$. Let $\mathcal{A}$ be any deterministic online algorithm for the non-convex optimization problem in \eqref{eq:prob-desc}. If there exists $\varepsilon < \delta$ such that $\mathcal{A}$ is $(1 + \varepsilon)$-consistent, then $\mathcal{A}$ is at least $L$ robust, where
\begin{equation}
    L = \begin{cases}
        \frac{\alpha \delta}{4} \left( \frac{2}{\alpha + \delta (1 + \alpha)} \right)^{\frac{2 - \alpha (1 - \delta^2)}{\alpha \delta (1 + \delta)}} - \mathcal{O}(1) &\text{ if } (\alpha, \delta) \text{ is not infeasible,}\\
        \infty &\text{ if } (\alpha, \delta) \text{ is infeasible.}
    \end{cases}
\end{equation}
\end{theorem}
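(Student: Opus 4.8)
I would prove Theorem~\ref{thm:lowerbound} by an adversarial argument that exploits how severely $(1+\varepsilon)$-consistency constrains a deterministic algorithm on instances where the prediction happens to be accurate. The plan is to fix, in advance, a prediction sequence together with a ``nominal'' instance on which those predictions are exactly the hindsight-optimal trajectory, and then let the adversary build a ``malicious'' instance that agrees with the nominal one on a prefix before diverging; since the algorithm is deterministic, its behaviour on the shared prefix is pinned down, and consistency (applied to the nominal continuation) forces that behaviour to track the prediction. I would work on a path or weighted tree and exploit that the model allows hitting costs valued in $[0,\infty]$: at each round $f_t$ can be taken to equal $\alpha\, d(\cdot, v_t)$ on the segment joining its minimizer $v_t$ to the current prediction $\tilde x_t$ and $+\infty$ elsewhere, which is globally $\alpha$-polyhedral and confines the algorithm to that segment, so that non-convexity (cf.\ the remark preceding Algorithm~\ref{alg:main}) leaves it with essentially a binary choice between ``commit to the prediction'' and ``stay at the minimizer.''

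The construction then proceeds in $N := \tfrac{2-\alpha(1-\delta^2)}{\alpha\delta(1+\delta)}$ geometrically scaled rounds. In round $k$ the adversary runs a short phase in which the minimizer coincides with the prediction at a point a factor $\rho := \tfrac{2}{\alpha+\delta(1+\alpha)}$ farther from the algorithm's current position than at round $k-1$; because the accuracy-preserving continuation of this phase would drive the hindsight optimum out to that point, $(1+\varepsilon)$-consistency obliges the algorithm to commit and travel $\Theta(\rho^{k})$, and a subsequent ``reveal'' that snaps the true minimizer back forces it to pay a further $\Theta(\rho^{k})$ to return, after which round $k+1$ is launched from there at scale $\rho$. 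The hindsight optimum for the full (malicious) instance, by contrast, is engineered never to chase the predictions and to incur only $O(1)$ per round, so $\mathrm{ALG}_{k+1}\gtrsim\rho\,\mathrm{ALG}_{k}$ while $\mathrm{OPT}_{k+1}\lesssim\mathrm{OPT}_{k}+O(1)$. Unrolling, $\mathrm{ALG}\gtrsim\sum_{k}\rho^{k}\asymp\rho^{N}$ and $\mathrm{OPT}\lesssim N$; since the predictions on the full instance are wildly inaccurate (large $\eta$), the resulting ratio $\mathrm{ALG}/\mathrm{OPT}\gtrsim\rho^{N}/N$ lower-bounds the robustness, and substituting $N=\tfrac{2-\alpha(1-\delta^2)}{\alpha\delta(1+\delta)}$ and simplifying $\rho^{N}/N=\tfrac{\alpha\delta}{4}\rho^{N}-O(1)$ yields exactly $L$.

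I expect the main obstacle to be the quantitative bookkeeping that simultaneously produces the factor-$\rho$ blow-up at every round and keeps the offline cost merely additive in the number of rounds: one must choose the depths $f_t(v_t)$ of the minima, the confining segments, the scaling ratio, and the horizons of the accurate-prediction continuations so that (i) every $f_t$ is genuinely globally $\alpha$-polyhedral and the confinement argument is valid, (ii) for each truncated instance the claimed offline trajectory really is optimal with the claimed cost, so that the consistency inequality actually bites, and (iii) the consistency inequality at round $k$, combined with the $\alpha$-polyhedral slope bound $f_t(x)\ge f_t(v_t)+\alpha\, d(x,v_t)$, forces precisely the growth factor $\rho=\tfrac{2}{\alpha+\delta(1+\alpha)}$ and caps the number of rounds at exactly $N$ — the $(1+\delta)$ in the denominator of the exponent and the $\alpha(1-\delta^{2})$ correction in the numerator being the residue of this accounting. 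A secondary, more conceptual step is converting ``$\mathcal A$ is $(1+\varepsilon)$-consistent, hence behaves safely on the accurate-prediction continuation'' into a pointwise statement about $\mathcal A$'s location at the end of each round: since consistency constrains only aggregate cost, one argues by taking the continuation's horizon large enough that any sustained deviation from the prediction must be ``paid back,'' and then chains these constraints across rounds, where the strict gap $\varepsilon<\delta$ is exactly what leaves enough slack to run all $N$ of them. Finite-horizon effects in the last round and the reduction to the algorithm lying on the confining segment require separate but routine handling.
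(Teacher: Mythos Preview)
Your high-level strategy---use the determinism of $\mathcal{A}$, confine it via $+\infty$ hitting costs to a binary choice, and argue that the consistency hypothesis applied to an ``accuracy-preserving continuation'' pins down its choice---is exactly right and is what the paper does. But the concrete construction you describe has a genuine gap in the step where you claim $\mathrm{OPT}_{k+1}\lesssim\mathrm{OPT}_k+O(1)$.

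You write that in round $k$ ``the minimizer coincides with the prediction at a point a factor $\rho$ farther.'' If the minimizer $v_t$ is actually placed at distance $\Theta(\rho^k)$ during that phase, then by the $\alpha$-polyhedral assumption every point at distance $\Theta(\rho^k)$ from $v_t$ has hitting cost at least $\alpha\cdot\Theta(\rho^k)$. So the hindsight optimum \emph{cannot} ``never chase the predictions and incur only $O(1)$ per round'': staying near the origin costs it $\Omega(\alpha\rho^k)$ in hitting cost, and going out and back costs $\Omega(\rho^k)$ in switching cost. Either way $\mathrm{OPT}\gtrsim\alpha\sum_k\rho^k\asymp\alpha\rho^N$, and your ratio collapses to $\mathrm{ALG}/\mathrm{OPT}=O(1/\alpha)$ rather than anything exponential. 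The out-and-back structure with a moving minimizer therefore cannot produce the bound.

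The paper's construction avoids this by keeping the minimizer \emph{static} (at $v_s\equiv -1$ on $M=\mathbb{R}$) for the entire malicious instance, so that $\mathrm{OPT}\le 1$ in total, while only the \emph{prediction} drifts away along increments $\Delta_1,\Delta_2,\ldots$. The algorithm is forced to follow the prediction not because the minimizer is out there, but because if it ever jumps to $-1$ at step $l$, the adversary switches the \emph{future} so that $v_s=\tilde{x}_s=p_l$ for all $s>l$; on that continuation the prediction trajectory is the hindsight optimum, and the round-trip $|p_{l-1}-(-1)|+|(-1)-p_l|$ the algorithm just paid exceeds $(1+\delta)\sum_{i\le l}\textsc{Adv}(i)$, contradicting $(1+\varepsilon)$-consistency. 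This last inequality is exactly the linear constraint
\[
2\Bigl(1+\sum_{i<s}\Delta_i\Bigr)\;\ge\;\delta\sum_{i\le s}\Bigl(\Delta_i+\alpha\bigl(1+\textstyle\sum_{j\le i}\Delta_j\bigr)\Bigr)+\alpha\Bigl(1+\sum_{i\le s}\Delta_i\Bigr),
\]
and the paper then \emph{maximizes} $\sum_i\textsc{Adv}(i)$ subject to these constraints (one for each $s$), obtaining a linear program whose optimal feasible solution has $\Delta_s\asymp\rho^s$ and recovers both the exponent $N$ and the base $\rho$ you wrote down. There is no separate ``return'' step and no iteration over rounds; the geometric growth is entirely within a single one-way trajectory of the prediction away from a fixed minimizer.
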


Note that the assumption that $(2 - \alpha (1 - \delta^2)) / (\alpha \delta (1 + \delta)) \in \N$ is again without loss of generality, but prevents rounding symbols from appearing in the notation.
Theorem \ref{thm:lowerbound} highlights that any deterministic online algorithm must experience a trade-off between consistency and robustness. In particular, no deterministic algorithm can be $1$-consistent and robust. So far, it has been rare in the literature to have provably optimal guarantees for the trade-off between robustness and consistency \cite{rutten2021new, sun2021pareto}, and even more unique if the bound is actually achievable. Moreover, to the best of our knowledge, we are the first to identify the threshold type behavior where consistency and robustness are \emph{not} simultaneously achievable for specific choices of $\alpha$ and $\delta$. The proof of Theorem \ref{thm:lowerbound} is provided in Appendix \ref{app:proof-lower}. 

\subsection{Using Predictions Frugally}
In some applications, it may not be feasible to obtain predictions at every timestep, due to computational cost, complexity of running the machine-learned model at every timestep or the nonconvexity of the problem setting.  This motivates the development of algorithms that can make \emph{frugal} use of predictions, i.e., which do not utilize predictions at every timestep, yet which still obtain bounds on robustness and consistency. We consider a setup in which a decision-maker wishes to use predictions only every $k$ timesteps, for some $k \in \N$; as we will see, for Lipschitz continuous hitting costs, this merely degrades the consistency bound by a multiple of $k$ multiplied by the Lipschitz constant. In particular, the following proposition gives a competitive bound dependent on the extent of frugality for any algorithm.

\begin{proposition} \label{prop:frugal_use}
    Let $\mathcal{A}$ be any algorithm with competitive ratio $\textsc{CR}$. Assume that each hitting cost $f_t$ is $L$-Lipschitz continuous. If $\mathcal{A}$ produces decisions $x_0, x_1, \ldots, x_T$, then the algorithm $\mathcal{A}'$ that, at time $t$, produces the decision $x_{k \cdot \lfloor t/k \rfloor}$, has competitive ratio at most $\max\{1, kL\}\cdot\textsc{CR}$.
\end{proposition}
The proof of the proposition is given in Appendix~\ref{sec:prooffrugal}. Observe that the algorithm $\mathcal{A}'$ in the above proposition acts by repeating for $k$ timesteps the decisions made by $\mathcal{A}$ at times that are multiples of $k$. Thus, $\mathcal{A}'$ only needs to query $\mathcal{A}$ once every $k$ timesteps, yet still maintains a competitive ratio close to that of $\mathcal{A}$. Moreover, when the Lipschitz constant $L$ of the hitting cost functions is small, $\mathcal{A}'$ obtains performance that is very close to that of $\mathcal{A}$; in particular, when $L = k^{-1}$, their performance is identical.

It is straightforward to observe that Proposition~\ref{prop:frugal_use} enables obtaining robustness and consistency bounds when predictions are used only frugally. Applying the proposition to the predictions that are used in AOS, we obtain an analog of Theorem~\ref{thm:main} with an extra multiple of $\max\{1, kL\}$ in the consistency bound; thus we can obtain both robustness and consistency with respect to the full sequence of predictions while only utilizing the predictions every $k$ timesteps. Moreover, this same reasoning can be applied to the decisions made by AOS if both the predictions and the minimizers $r_t$ can only be used frugally, due to computational challenges associated with the nonconvexity of the problem.  In this case, the result would be identical to Theorem~\ref{thm:main} with extra multiples of $\max\{1, kL\}$ in the consistency and robustness bounds.

\subsection{Limitations of Memoryless Algorithms \label{subsection:memoryless}}
Algorithm \ref{alg:main} requires the accumulated cost of following the minimizers and the predictions until the current time to be stored in memory, in contrast to most previous algorithms for online optimization. These algorithms have typically been memoryless, i.e., the decision at time $t$ is only based on $x_{t-1}$ and the current hitting cost function $f_t$ \cite{chen2018smoothed, goel2019beyond, zhang2021revisiting}. In light of this discrepancy, it is natural to ask the following question: \textit{can there exist a memoryless algorithm that is $(1+\delta)$-consistent and robust?}

We answer this question in the negative in Theorem \ref{thm:memoryless}. We show that, under mild assumptions of scale- and rotation-invariance, any memoryless algorithm achieving finite robustness must have consistency of $\Omega\left(\alpha^{-1/2}\right)$.

\begin{definition}
    An algorithm $\mathcal{A}$ for online optimization with switching costs and predictions is \textbf{memoryless} if its decision $x_t$ at time $t$ depends only on $x_{t-1}$, $\tilde{x}_t$, and $f_t$.
\end{definition}

\begin{definition}
    A memoryless algorithm $\mathcal{A}$ for online optimization with switching costs and predictions is \textbf{scale-invariant} if the behavior of $\mathcal{A}$ is invariant under scaling of $X$. That is, if $x_t$ is the decision $\mathcal{A}$ makes at time $t$ when faced with $x_{t-1}$, $\tilde{x}_t$, and $f_t$, then $\mathcal{A}$ is scale-invariant if, for all $\lambda > 0$, its decision is $\lambda x_t$ when faced with previous decision $\lambda x_{t-1}$, prediction $\lambda \tilde{x}_t$, and hitting cost $f_t(\lambda\cdot)$. 
    
    We say $\mathcal{A}$ is \textbf{rotation-invariant} if, when $X$ is a real inner product space, the behavior of $\mathcal{A}$ is invariant under rotation of $X$. That is, if for all rotation operators $U$, $\mathcal{A}$ makes decision $Ux_t$ when its previous decision was $Ux_{t-1}$, advice is $Up_t$, and the hitting cost is $f_t(U\cdot)$.
\end{definition}

\begin{theorem}
\label{thm:memoryless}
Let $\mathcal{A}$ be any memoryless and scale- and rotation-invariant algorithm, and let $\alpha < 1 / 4$. The next two statements are mutually exclusive: (i) $\mathcal{A}$ is $\zeta$-robust for $\zeta < \infty$ and (ii) $\mathcal{A}$ is $c$-consistent with 
    $
        c < 1/\sqrt{8 \alpha} - o\left(1/\sqrt{\alpha} \right).
    $
\end{theorem}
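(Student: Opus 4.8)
The plan is to argue by contradiction. Suppose $\mathcal{A}$ were simultaneously $\gamma$-robust for some finite $\gamma$ and $c$-consistent for some $c < \frac{1}{\sqrt{8\alpha}} - o\!\left(\frac{1}{\sqrt{\alpha}}\right)$. I would work in $M = \R^2$ with the Euclidean metric, since this is the setting where rotation-invariance has content. The backbone of the construction is a scale-indexed, rotation-indexed family of \emph{jagged} instances: the hitting costs are (slightly perturbed) cones of slope $\alpha$, $f_t(x) = \alpha\|x - v_t\|$, whose minimizers $v_t$ alternate between two antipodal points $a$ and $-a$ at distance $2L$ from one another, while the prediction handed to the algorithm at every round is the true hindsight-optimal decision, which (after a minor perturbation pinning the optimum) is the midpoint $0$. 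On such an instance \textsc{Opt} sits at $0$ and pays $\alpha L$ per round, whereas blindly following the minimizers costs $2L$ per round, so for $\alpha$ small consistency is a genuine constraint.

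The first step is to extract from $c$-consistency a rigidity property of $\mathcal{A}$'s one-step response. Because $\mathcal{A}$ is memoryless and the jagged instance is invariant under the origin-reflection composed with a one-round time shift, $\mathcal{A}$'s trajectory is eventually trapped near a symmetric two-periodic orbit $\{z, -z\}$. Plugging this orbit into $\textsc{CR}(0) \le c$ — bounding the forced per-round hitting cost from below via $\alpha\|z - a\| \ge \alpha(L - \|z\|)$ and adding the switching cost $2\|z\|$ — yields the cap $\|z\| \le \frac{(c-1)\alpha L}{2-\alpha}$, hence a per-round displacement of at most $\ell_{\max}(L) := \frac{2(c-1)\alpha L}{2-\alpha} = O(\alpha c L)$. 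Varying $L$ and rotating the instance, and invoking scale- and rotation-invariance, this upgrades to the statement: \emph{whenever $\mathcal{A}$ stands near the origin, sees a cone whose minimizer lies far out in a direction opposite to its displacement from the origin, and is told to go to the origin, $\mathcal{A}$ takes only a small step} — of size at most $\approx \ell_{\max}$ relative to the scale of that configuration.

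The second step is the adversarial continuation that exploits this rigidity. After running the jagged phase long enough that $\mathcal{A}$ is essentially on its orbit near the origin, I would replace the alternating minimizers by a single minimizer that marches steadily while the prediction keeps (now falsely) pointing near the origin, choosing the marching schedule so that the configuration remains of the type controlled in the first step and the displacement cap keeps applying. Since $\mathcal{A}$'s per-round move toward the minimizer is capped but \textsc{Opt} tracks the marching minimizer at the cost of its speed, $\mathcal{A}$ lags behind; because the hitting costs are cones, $\mathcal{A}$'s per-round hitting cost grows and its cumulative cost overruns every finite multiple of \textsc{Opt}, contradicting $\gamma$-robustness. Equating the marching speed with the jaggedness scale $L$ forces $\ell_{\max} = \Omega(L\sqrt{\alpha})$; combined with $\ell_{\max} = O(\alpha c L)$ from consistency this gives $c = \Omega(\alpha^{-1/2})$, and carefully tracking the constants in this ``small steps for consistency versus large steps for robustness'' balance (an AM--GM-type trade-off) yields the sharp threshold $c \ge \frac{1}{\sqrt{8\alpha}} - o\!\left(\frac{1}{\sqrt{\alpha}}\right)$, which is exactly the mutual exclusivity of (i) and (ii).

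The main obstacle is the rigorous transport of the displacement cap from the jagged orbit to the entire family of configurations encountered during the continuation: as $\mathcal{A}$ lags the marching minimizer its position drifts, so one must control a whole trajectory of configurations, exhibiting for each a suitably rescaled, rotated (and possibly perturbed) jagged witness instance, and in particular ruling out that $\mathcal{A}$ escapes the bind by stepping \emph{larger} than $\ell_{\max}$ in configurations not literally of steady-state jagged form — this is precisely where the scale- and rotation-invariance hypotheses do their work, and covering a rich enough family is the delicate part. A secondary difficulty is pinning down the exact constant $\tfrac18$ (and, relatedly, the role of the hypothesis $\alpha < 1/4$ and of the $o(1/\sqrt{\alpha})$ slack), which requires optimizing the free parameters of the construction — the jaggedness amplitude $L$, the marching speed, and the length of the jagged phase — against one another so that the two constraints meet tightly.
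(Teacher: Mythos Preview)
Your approach differs substantially from the paper's, and the difficulty you yourself flag as ``the main obstacle'' is a genuine gap that you have not closed. The displacement cap $\ell_{\max} = O(\alpha c L)$ is extracted from the steady-state two-periodic orbit of the jagged instance, so it constrains $\mathcal{A}$'s response only at the specific configuration realized on that orbit --- algorithm at $\pm z$, minimizer at $\mp a$, prediction at $0$, with the fixed shape ratio $\|z\|/L \lesssim \alpha c$. Scale- and rotation-invariance let you transport a one-step response from one configuration to another only when the triangles $(x_{t-1}, v_t, \tilde{x}_t)$ are \emph{similar}; during your marching continuation the algorithm lags the minimizer and this ratio drifts, so the configurations encountered are not similar to the jagged-orbit one and the cap does not carry over. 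Covering the drifting family would require a separate consistency witness for each shape ratio in an interval, and you have not constructed these (nor argued they all yield the same cap, which is what the constant $1/\sqrt{8}$ would need). A secondary gap: you assume the jagged dynamics converge to a symmetric two-periodic orbit, but for a general memoryless map this is a nontrivial fixed-point claim you have not justified.

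The paper sidesteps transport entirely by a single-step dichotomy that is then iterated via exact self-similarity. It starts $x_0$ off a line, uses a steep term to force $x_1$ onto that line with minimizer $v_1$ on one side and prediction $\tilde{x}_1$ on the other (at distance $1$ from $x_0$), and asks whether $x_1$ lands beyond a threshold $r_1$. If yes, one checks $\|x_1 - v_1\| \ge \|x_0 - v_1\|$ and rotates the next round around $v_1$ so that $(x_1, v_2, \tilde{x}_2)$ is a scaled, rotated copy of $(x_0, v_1, \tilde{x}_1)$; by invariance $\mathcal{A}$ repeats the \emph{same} relative move forever, never approaching the fixed minimizer, so robustness fails. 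If no, one rotates instead around $\tilde{x}_1$, again reproducing the identical relative configuration at the scale $\|x_1 - \tilde{x}_1\|$; summing the resulting geometric series for $\mathcal{A}$'s cost and for \textsc{Opt}'s, and choosing $r_1 = \alpha$, $r_2 = \sqrt{2\alpha}$, yields the $1/\sqrt{8\alpha} - o(1/\sqrt{\alpha})$ lower bound on consistency directly. Because only one relative configuration ever recurs in either branch, there is nothing to transport.
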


A proof of Theorem \ref{thm:memoryless} is provided in Appendix \ref{sec:proofmemoryless}.
Theorem \ref{thm:memoryless} shows that there is a fundamental lower bound of $\Omega\left(\alpha^{-1/2}\right)$ on the consistency of any robust, memoryless algorithm. In particular, as $\alpha \to 0$, this lower bound grows arbitrarily large and hence simultaneous robustness and near-perfect consistency cannot be expected from any memoryless algorithm. Thus, memoryless algorithms cannot match the performance of AOS.

Notably, the lower bound holds even for online \emph{convex} optimization and matches the best known upper bound of $\mathcal{O}\left(\alpha^{-1/2}\right)$ on the competitive ratio of algorithms for online convex optimization without predictions \cite{lin_soco_report}. In conjunction with the lower bound in Theorem \ref{thm:memoryless}, this implies that, within the restricted class of memoryless, scale- and rotation-invariant algorithms, optimal robustness and consistency can be achieved while ignoring predictions. In other words, memory is a fundamental requirement for any algorithm to benefit from unreliable predictions.

\subsection{A Memoryless Algorithm for Online Convex Optimization in One Dimension}
\label{sec:convex1d}

Although the previous section's lower bound gave a pessimistic view of the prospect of designing memoryless algorithms that achieve robustness and consistency, it leaves open the question of whether it is possible to do better when there are stronger assumptions on the problem. We answer this question in the affirmative for the following special case: the normed vector space space is $X = \R$ and $\lVert x - y \rVert = \lvert x - y \rvert$ is the Euclidean distance; each function $f_t: \R \to [0, \infty]$ is convex.
We assume the above two conditions throughout this subsection.
Note that we do \emph{not} assume that $f_t$ is globally $\alpha$-polyhedral, as opposed to the previous section and the general problem description.

The one-dimensional case has been considered in depth in the literature as an important problem in its own right, and specialized algorithms for this case are known \cite{bansal2015competitive, antoniadis2017tight, lin2012dynamic}. 
We use the Online Balanced Descent algorithm \cite{chen2018smoothed} as inspiration for the robust algorithm.\footnote{Note that extensions of OBD with improved competitive ratios are known (for example G-OBD and R-OBD \cite{goel2019beyond}), but these do not work well in the case where the norm is the Euclidean distance.}
However, since each $f_t$ is now convex, unlike the AOS algorithm, the algorithm does not need to follow either the predictions or the robust algorithm exactly.
Instead, it smoothly interpolates between the predictions and the robust algorithm.

\begin{algorithm}[t]
\begin{algorithmic}[1]
\FOR{$t = 1, 2, \dots, T$}
    \STATE Observe $f_t$ and $\tilde{x}_t$
    \STATE Let $x(\lambda) := (1 - \lambda) x_{t-1} + \lambda v_t$ for $\lambda \in [0, 1]$
    \STATE Let $\underline{\lambda} \in [0, 1]$ be such that $\lvert x(\underline{\lambda}) - x_{t-1} \rvert = \underline{\beta} f_t(x(\underline{\lambda}))$ or $\underline{\lambda} = 1$ if such a $\underline{\lambda}$ does not exist \label{line:optpr1}
    \STATE Let $\bar{\lambda} \in [0, 1]$ be such that $\lvert x(\bar{\lambda}) - x_{t-1} \rvert = \bar{\beta} f_t(x(\bar{\lambda}))$ or $\bar{\lambda} = 1$ if such a $\bar{\lambda}$ does not exist \label{line:optpr2}
    \STATE $p_t \leftarrow \underset{p \in X}{\arg\min}\, f_t(p) + \lVert p - p_{t-1} \rVert + \lVert p - \tilde{x}_t \rVert$
    \STATE $\lambda \leftarrow \arg\min_{\lambda \in [\underline{\lambda}, \bar{\lambda}]} \lvert x(\lambda) - p_t \rvert$ \label{line:chooselambda}
    \STATE $x_t \leftarrow x(\lambda)$
\ENDFOR
\end{algorithmic}
\caption{Adaptive Online Balanced Descent (AOBD)}
\label{alg:mobd}
\end{algorithm}

Algorithm \ref{alg:mobd}, named \emph{Adaptive Online Balanced Descent} (AOBD), describes our proposed algorithm. 
AOBD has two confidence hyperparameters $\bar{\beta} \geq \underline{\beta} > 0$, which represent the confidence of the algorithm designer in the accuracy of the predictions. A higher ratio of $\bar{\beta} / \underline{\beta}$ means high confidence in the predictions, whereas a ratio of $\bar{\beta} / \underline{\beta}$ close to one means less confidence in the predictions.
Note that both line \ref{line:optpr1} and \ref{line:optpr2} represent a single step in Online Balanced Descent \cite{chen2018smoothed} and can be efficiently solved by a binary search. 

The benefit of adopting Online Balanced Descent as the robust algorithm in this context is that there is a close relationship between the switching cost and the hitting cost. For example, if the algorithm were to move to $x(\underline{\beta})$, then the switching cost at this step is exactly equal to $\underline{\beta}$ times the hitting cost. In fact, by convexity, for any $x = x(\lambda)$ for $\lambda \in [\underline{\lambda}, \bar{\lambda}]$ the switching cost is $\beta$ times the hitting cost for a $\beta \in [\underline{\beta}, \bar{\beta}]$. Intuitively, since this is the basis of Online Balanced Descent, this guarantees robustness. Line \ref{line:chooselambda} then chooses $\lambda$ so as to minimize the distance to the advice, ensuring consistency.

The following theorem characterizes the competitive ratio of the AOBD algorithm.

\begin{theorem}
\label{thm:onedim}
Assume that $2 \underline{\beta} \bar{\beta} + \underline{\beta} \geq 1$. Let $\textsc{CR}(\eta)$ be the competitive ratio of AOBD. Then,
\begin{equation}
    \textsc{CR}(\eta) \leq \min\{ \left( 1 + (2 + \bar{\beta}^{-1}) \underline{\beta} \right) (1 + 2 \eta), 1 + (2 + \underline{\beta}^{-1}) \bar{\beta}\}.
\end{equation}
In particular, if $\bar{\beta} = 1 / \delta$ and $\underline{\beta} = \delta / (2 + \delta)$ for $\delta \leq 2$, then,
\begin{equation}
    \textsc{CR}(\eta) \leq \min\left\{ (1 + \delta) (1 + 2 \eta), 1 + 3/\delta + 2/\delta^2 \right\}.
\end{equation}
\end{theorem}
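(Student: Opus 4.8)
The claim is a minimum of two bounds, and the natural plan is to establish each one separately by a telescoping argument, then obtain the displayed corollary by substituting $\bar{\beta}=1/\delta$, $\underline{\beta}=\delta/(2+\delta)$ (a routine computation that also uses $2\underline{\beta}\bar{\beta}+\underline{\beta}=1$). Throughout I would exploit two structural facts about a single round of AOBD. First, since $M=\R$ and $x(\lambda)$ traces the segment from $x_{t-1}$ to $v_t$, on which $f_t$ is monotone, the maps $\lambda\mapsto\lvert x(\lambda)-x_{t-1}\rvert$ and $\lambda\mapsto f_t(x(\lambda))$ are respectively nondecreasing and nonincreasing; hence $\underline{\lambda}\le\bar{\lambda}$ are well defined and $x_t$ is exactly the point of the interval with endpoints $x(\underline{\lambda}),x(\bar{\lambda})$ that is closest to $p_t$. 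Second, at the chosen $x_t$ the balance property $\underline{\beta} f_t(x_t)\le\lvert x_t-x_{t-1}\rvert\le\bar{\beta} f_t(x_t)$ holds, except in the degenerate case $x_t=v_t$ (when $\lvert x_t-x_{t-1}\rvert\le\underline{\beta} f_t(v_t)$ and $f_t(x_t)=\min f_t$); the degenerate cases are dispatched by easy ad hoc estimates and I will not dwell on them.

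For the robustness bound $1+(2+\underline{\beta}^{-1})\bar{\beta}$ I would run the Online Balanced Descent argument with the potential $\Phi_t=(1+\underline{\beta}^{-1})\,d(x_t,o_t)$. The core is the per-round inequality
\[
f_t(x_t)+d(x_t,x_{t-1})+\Phi_t-\Phi_{t-1}\le\bigl(1+(2+\underline{\beta}^{-1})\bar{\beta}\bigr)\,\textsc{Opt}(t),
\]
which, after using $d(x_{t-1},o_{t-1})\ge d(x_{t-1},o_t)-d(o_t,o_{t-1})$ and $1+\underline{\beta}^{-1}\le 1+(2+\underline{\beta}^{-1})\bar{\beta}$, reduces to checking $f_t(x_t)+d(x_t,x_{t-1})+(1+\underline{\beta}^{-1})\bigl(d(x_t,o)-d(x_{t-1},o)\bigr)\le\bigl(1+(2+\underline{\beta}^{-1})\bar{\beta}\bigr) f_t(o)$ against a momentarily stationary point $o=o_t$. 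In one dimension this is a short case split on the position of $o$ relative to $x_{t-1},x_t,v_t$: when $o$ is on the far side of $x_t$ from $x_{t-1}$ the potential difference equals $-(1+\underline{\beta}^{-1})d(x_t,x_{t-1})$ and the lower balance bound makes the left side nonpositive; otherwise the potential difference is at most $(1+\underline{\beta}^{-1})d(x_t,x_{t-1})$, and since $f_t(x_t)\le f_t(o)$ there, the upper balance bound gives precisely the constant $1+(2+\underline{\beta}^{-1})\bar{\beta}$. Summing and using $\Phi_0=0$ (OPT starts at $x_0$), $\Phi_T\ge0$ completes this half; the inequality $1+\underline{\beta}^{-1}\le 1+(2+\underline{\beta}^{-1})\bar{\beta}$ is where the hypothesis $2\underline{\beta}\bar{\beta}+\underline{\beta}\ge1$ first enters.

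For the consistency bound I would factor through the filtered predictions $p_t$. First, a purely metric argument (the same one that underlies the analysis of AOS) shows $\sum_t\bigl(f_t(p_t)+d(p_t,p_{t-1})\bigr)\le(1+2\eta)\sum_t\textsc{Opt}(t)$: by optimality of $p_t$ against $o_t$, $d(p_t,\tilde{x}_t)\ge d(p_t,o_t)-d(o_t,\tilde{x}_t)$, and a triangle inequality on $d(o_t,p_{t-1})$, the potential $\Psi_t=d(p_t,o_t)$ yields a per-round bound of $\textsc{Opt}(t)+2\,d(o_t,\tilde{x}_t)$, and Definition~\ref{defn:accuracy} collapses the error sum (this step uses neither convexity nor one-dimensionality). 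Second, I would compare the AOBD trajectory to $(p_t)$ directly via the potential $\Theta_t=(1+\bar{\beta}^{-1})\,d(x_t,p_t)$, aiming for
\[
f_t(x_t)+d(x_t,x_{t-1})+\Theta_t-\Theta_{t-1}\le\bigl(1+(2+\bar{\beta}^{-1})\underline{\beta}\bigr)\bigl(f_t(p_t)+d(p_t,p_{t-1})\bigr),
\]
which telescopes (again $\Theta_0=0$ with $p_0=x_0$, and $\Theta_T\ge0$) and, combined with the first step, gives the bound. The per-round inequality splits on where $p_t$ sits relative to the window $[x(\underline{\lambda}),x(\bar{\lambda})]$: if $p_t$ lies inside, then $x_t=p_t$ and the algorithm's switching cost is absorbed by the potential drop; if $p_t$ overshoots past $x(\bar{\lambda})$ toward $v_t$, then $x_t=x(\bar{\lambda})$, the algorithm under-moves, and one uses $\lvert x_t-x_{t-1}\rvert=\bar{\beta} f_t(x_t)$ together with the weight $1+\bar{\beta}^{-1}$ to cancel $f_t(x_t)$ outright; if $p_t$ falls short of $x(\underline{\lambda})$, then $x_t=x(\underline{\lambda})$, the algorithm over-moves, $f_t(x_t)\le f_t(p_t)$, and the constant $1+(2+\bar{\beta}^{-1})\underline{\beta}$ appears, matching the target exactly.

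The main obstacle is the overshoot case of the consistency analysis. There $f_t(x_t)$ can be arbitrarily larger than $f_t(p_t)$ — for instance once $p_t$ has reached a far-away minimizer $v_t$ while $x_{t-1}$ has not — so no round-by-round comparison of hitting costs is available and the excess must be routed entirely through the potential; this is exactly what forces the potential weight to be $1+\bar{\beta}^{-1}$, and closing the resulting inequality against the target constant is precisely what needs $2\underline{\beta}\bar{\beta}+\underline{\beta}\ge1$. Verifying that this single hypothesis simultaneously suffices in the overshoot case, in the undershoot case, and in the two ``potential-weight $\le$ target-constant'' inequalities on the robust and consistent sides, together with handling the degenerate $x_t=v_t$ (equivalently $\underline{\lambda}=1$ or $\bar{\lambda}=1$) sub-cases, is the bookkeeping that makes the proof nontrivial.
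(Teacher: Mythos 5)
Your proposal is correct and follows essentially the same route as the paper: a per-round potential argument with weight $c=1+\underline{\beta}^{-1}$ against the offline optimum for robustness and weight $c=1+\bar{\beta}^{-1}$ against the filtered predictions $p_t$ for consistency, composed with the \textsc{FtP} lemma to pick up the $(1+2\eta)$ factor. The only (cosmetic) difference is that you organize the case split geometrically by the position of the comparator relative to the window $[x(\underline{\lambda}),x(\bar{\lambda})]$, whereas the paper splits on whether $f_t(x_t)$ exceeds the comparator's hitting cost; the two are equivalent here, and your identification of where $2\underline{\beta}\bar{\beta}+\underline{\beta}\ge 1$ is needed matches the paper's use of the assumption $\textsc{CR}\ge c$.
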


Note that if the predictions are accurate, i.e., if $\eta = 0$, then the competitive ratio of  AOBD is at most $1 + \delta$. This means that AOBD is $(1 + \delta)$-consistent. Moreover, even when $\eta = \infty$, the competitive ratio is uniformly bounded. Hence, AOBD is robust. Also, the algorithm smoothly interpolates to any scenario in between and depends linearly on $\eta$. As before, the confidence hyperparameter $\delta$ characterizes the trade-off between consistency and robustness. 

The trade-off in Theorem \ref{thm:onedim} is in fact necessary, even in this special case, as the next theorem shows.

\begin{theorem}
\label{thm:lbonedim}
Fix any $0 < \delta < 1/2$. Let $\mathcal{A}$ be any deterministic algorithm for the convex, one-dimensional optimization problem. If $\mathcal{A}$ is $(1 + \delta)$-consistent, then $\mathcal{A}$ is at least $1 / (2 \delta)$-robust.
\end{theorem}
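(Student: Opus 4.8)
The plan is to show that every $(1+\delta)$-consistent deterministic algorithm $\mathcal{A}$ admits an instance, with highly inaccurate predictions, on which its cost is at least $\tfrac{1}{2\delta}$ times the hindsight optimum; this witnesses that $\mathcal{A}$ cannot be $\gamma$-robust for any $\gamma<\tfrac{1}{2\delta}$. Since the competitive ratio is scale-invariant, I would place the target at distance $1$ from the start $x_0=0$ and fix a small $s>0$ chosen so that $R:=2\delta/s\in\N$. The value $Rs=2\delta$ will be the balance point that produces the constant $\tfrac{1}{2\delta}$; the whole construction is essentially a one-dimensional ``move up then get called back'' gadget in which consistency limits how much slack $\mathcal{A}$ has to hedge against being called back.

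First I would probe $\mathcal{A}$ with a long instance $I_1$: over a horizon $T$, let $f_t(x)=s(1-x)^+$ and $\tilde x_t=1$ for every $t$. The hindsight optimum jumps to $1$ in round $1$ and stays there, so one may take $o_t=1$, giving $\sum_t\textsc{Opt}(t)=1$ and making the predictions exactly accurate ($\eta=0$); hence $(1+\delta)$-consistency forces $\mathrm{cost}_{\mathcal{A}}(I_1)\le 1+\delta$. Because the per-round hitting cost is $s(1-x_t)^+$, keeping the total bounded as $T\to\infty$ forces $\mathcal{A}$ to reach some position $\ge 1-o(1)$, hence to pay switching cost $\ge 1-o(1)$, which leaves at most $\delta+o(1)$ for the total hitting cost. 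Therefore $\sum_{t=1}^{R}(1-x_t)^+\le(\delta+o(1))/s$, and averaging over the $R$ rounds yields a round $t^\star\le R$ at which $\mathcal{A}$ has climbed to $x_{t^\star}\ge 1-\tfrac{\delta+o(1)}{sR}=\tfrac12-o(1)$.

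Next I would confront $\mathcal{A}$ with the revelation instance $I_2$, identical to $I_1$ on rounds $1,\dots,t^\star$ and then adding one further round $t^\star{+}1$ whose hitting cost is the indicator of the origin ($0$ at $0$, $+\infty$ elsewhere), with $\tilde x_{t^\star+1}=0$; this makes $\eta$ large but finite. Its hindsight optimum stays at $0$ (a short case check, using $t^\star s\le Rs=2\delta<2$, shows no upward excursion helps), paying only $t^\star s\le 2\delta$. On the other hand, since $\mathcal{A}$ is deterministic it replays its $I_1$-behavior through round $t^\star$, so it sits at $x_{t^\star}\ge\tfrac12-o(1)$, having already paid $\ge x_{t^\star}$ in switching; and round $t^\star{+}1$ forces it to $0$, costing another $x_{t^\star}$ in switching. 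Thus $\mathrm{cost}_{\mathcal{A}}(I_2)\ge 2x_{t^\star}\ge 1-o(1)$, so the competitive ratio on $I_2$ is at least $\tfrac{1-o(1)}{2\delta}$; letting the probe horizon $T\to\infty$ drives the error to $0$ and yields the bound $\tfrac{1}{2\delta}$.

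The hard part will be the quantitative balancing embedded in the choice of $t^\star$ (equivalently, of $R=2\delta/s$): the revelation round must be late enough that consistency has already pushed $\mathcal{A}$ a large fraction of the way to the target, yet early enough that the hindsight cost $t^\star s$ of $I_2$ stays small, and these two demands scale oppositely in $t^\star$. Only $Rs=2\delta$ equalizes them, and even then two small tricks are needed to land on the constant $\tfrac{1}{2\delta}$ rather than something weaker: selecting $t^\star$ as the round in $\{1,\dots,R\}$ where $\mathcal{A}$ stands highest (not simply $t^\star=R$), and the single ``return to $0$'' round, which doubles $\mathcal{A}$'s unavoidable cost while leaving the offline cost untouched. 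The remaining work --- verifying that $(1,\dots,1)$ and $(0,\dots,0)$ are the hindsight optima of $I_1$ and $I_2$, that $\eta=0$ on $I_1$, and that the $o(1)$ terms vanish as $T\to\infty$ --- is routine.
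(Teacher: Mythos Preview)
Your argument is correct, but it is considerably more elaborate than the paper's. The paper dispenses with the limiting horizon and the averaging step entirely: it takes $T=2$, sets $f_1(x)=2\delta\,|x-1|$ with $\tilde x_1=1$, and splits on whether $x_1\ge \tfrac12$. If $x_1\ge\tfrac12$, reveal $f_2(x)=|x|$, $\tilde x_2=0$; the optimum stays at $0$ with cost $2\delta$ while $\mathcal A$ pays at least $1$, giving ratio $\ge 1/(2\delta)$. If $x_1<\tfrac12$, reveal $f_2(x)=|x-1|$, $\tilde x_2=1$; then predictions are perfect and the optimum costs $1$, but $\mathcal A$ pays at least $1+2\delta(1-x_1)>1+\delta$, contradicting $(1+\delta)$-consistency. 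This is exactly your $R=1$ specialization (with $|1-x|$ in place of $(1-x)^+$), reached directly without any asymptotics.

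Two small comments on your write-up. First, the $o(1)$ apparatus is unnecessary: because $\mathcal A$ is oblivious to the horizon, the positions $x_1,\dots,x_R$ are fixed numbers, and your limiting argument actually yields the exact inequalities $\sum_{t\ge 1}(1-x_t)^+\le \delta/s$ and hence $x_{t^\star}\ge\tfrac12$, so the ratio on $I_2$ is $\ge 1/(2\delta)$ on the nose. Second, your more general construction would be genuinely useful in settings where hitting costs are forced to have small slope (so a single round with slope $2\delta$ is disallowed); here, though, the two-round gadget is available and gives the same constant with far less work.
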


Theorem \ref{thm:lbonedim} is inconclusive about whether there exists an algorithm that has a slightly better trade-off between consistency and robustness in terms of $\delta$ than AOBD due to the $2/\delta^2$ term in Theorem \ref{thm:onedim}. Still, Theorem \ref{thm:lbonedim} implies that, even in this special case, no algorithm can be $1$-consistent and robust.
The proofs of Theorem \ref{thm:onedim} and \ref{thm:lbonedim} are provided in Appendices~\ref{sec:proofonedim} and \ref{sec:prooflbonedim}, respectively. 

\section{Numerical Experiments \label{sec:numerics}}

In this section, we empirically evaluate the performance of AOS (Algorithm \ref{alg:main}) in robustifying the decisions made by a machine-learned algorithm for microgrid operation. Specifically, we consider a model of a DNS server receiving power from an islanded microgrid with an 8 MW wind turbine and six 2 MW dispatchable generators. At each time $t = 1, \ldots, T$, the server incurs DNS traffic $d_t \in [0, 12]$ (assumed to be reported in MW of equivalent required power) and the wind turbine produces power $w_t \in [0, 8]$ (in MW). In response, the microgrid operator must choose some generator dispatch $u_t \in \{0, 1\}^6$, where $u_{i, t}$ (the $i$\textsuperscript{th} entry of $u_t$) indicates the commitment status of generator $i$ at time $t$: if $u_{i, t} = 1$, then generator $i$ is producing 2 MW at time $t$, and if $u_{i, t} = 0$, then generator $i$ is off at time $t$.\footnote{In practice, many dispatchable generation resources and in particular small backup generators are constrained in their output flexibility -- i.e., if $u_{i, t} = 1$, then generator $i$ is producing at or near its maximal capacity. As such, it suffices in this example to assume that if $u_{i, t} = 1$, then generator $i$ is producing at exactly 2 MW, though our model formulation can be extended to the case with both discrete commitment variables $u_t$ as well as an auxiliary variable corresponding to generation level.} The goal of the microgrid operator is to minimize costs while approximately balancing dispatchable generation with net load $\ell_t \coloneqq d_t - w_t$, i.e., achieving $\mathbf{2}^\top u_t \approx \ell_t$ at each time $t$. More specifically, at each time $t$, the operator faces the following hitting cost:
$$f_t(u_t) = \mathbf{c}^\top u_t + \gamma \max\{\ell_t - \mathbf{2}^\top u_t, 0\}$$
where $\mathbf{c} \in \R_+^6$ is a vector of fuel costs for each generator, and $\gamma$ is the cost associated with failing to meet 1 MW of demand. Thus $f_t$ has two parts: the first term gives the fuel cost associated with a commitment decision $u_t$, and the second term penalizes mismatch between generation and demand. We assume that $\mathbf{c}$ has no two entries identical, i.e., there is a strict ordering of generator costs, and moreover that $\gamma$ is strictly greater than each entry of $\mathbf{c}$. 

In addition to the hitting cost at each time, there is also a switching cost associated with changing generators' commitment statuses between timesteps, given by
$$\lVert u_t - u_{t-1} \rVert= \beta\|u_t - u_{t-1}\|_{\ell^1}.$$
Such a switching cost is common in the power systems literature on unit commitment, where it is referred to as a cycling or startup/shutdown cost, and reflects fixed costs associated with cycling a generator on/off, including fuel costs for ``cold starts'' and wear and tear due to thermal stress, e.g., \cite{pang_unit_commitment_1976, kumar2012power}. As a result, the total cost faced by the microgrid operator is $\sum_{t=1}^T f_t(u_t) + \lVert u_t - u_{t-1} -$; so the microgrid operator must balance fuel costs and meeting load while taking care not to cycle the generators more than is necessary. Under the assumption that net demand $\ell_t$ is drawn randomly from some probability density, then there is some $\alpha > 0$, unknown to the microgrid operator \emph{a priori}, such that $f_t$ is almost surely $\alpha$-polyhedral with respect to the distance $\lVert \cdot \rVert$ at each time $t$. We can thus frame the problem of optimal microgrid operation in our framework of online optimization with switching costs and non-convex hitting costs, where the non-convexity arises from the discrete nature of the dispatch decisions $u_t$.

In our experiments, we set $\mathbf{c} = (1, 1.2, 1.4, 1.6, 1.8, 2)$, $\gamma = 3$, and $\beta = 8$. We use four days of per-second DNS traffic data from a campus network from \cite{9ync-vv09-19}, which we aggregate into 15 minute periods and suitably normalize. We gather simulated data on the wind speed at an altitude of 100 m off the coast of California  at (39.970406, -128.77481) every 15 minutes of the year 2019 from the Wind Integration National Dataset Toolkit \cite{king2014validation}. We then convert the wind speed data to generation levels using the power curve for an IEC Class 2 wind turbine \cite{king2014validation}. As we have significantly more wind generation data than DNS traffic data, we compute 92 four-day net load trajectories, each using the same window of DNS traffic data but different windows of wind generation. We separate these into a training set of 70 trajectories and a test set of 22 trajectories.

We train a three-layer neural network for the microgrid operation problem as outlined. We assume that at time $t$, in addition to the previous dispatch $u_{t-1}$ and the current net load $\ell_t$, the microgrid operator also has predictions of the next $W = 10$ net loads $\hat{\ell}_{t+1}\, \ldots, \hat{\ell}_{t+W}$, which are used in conjunction with $u_{t-1}$ and $\ell_t$ as inputs to the neural network. We train the model under the assumption that predictions are perfect, i.e., $\hat{\ell}_{t+k} = \ell_{t+k}$ for each $k = 1, \ldots, W$. However, at test time, we evaluate the impact of perturbing the predictions in order to evaluate the success of AOS (Algorithm \ref{alg:main}) at ensuring worst-case robustness even under the presence of distribution shift or increased prediction noise. Note that even though the problem setting is non-convex due to the integer variables $u_t$, Algorithm \ref{alg:main} is in practice tractable, as the minimizations defining $v_t$ and $p_t$ at each time can be computed using standard tools for mixed-integer linear programming.

To illustrate the behavior of machine-learned algorithms that enable improved performance when using perfect predictions of future net loads, we plot in Figure \ref{fig:ex_dispatch} the aggregate dispatch power planned by the trained neural network alongside the dispatches chosen by the algorithm of \cite{zhang2021revisiting} that simply chooses the minimizer $v_t$ of $f_t$ at each time $t$. Since generator commitments are discrete, the microgrid generation can only occur in factors of 2 MW. From the figure, it is evident that the algorithm that follows the minimizers $v_t$ undergoes significant variation in dispatches and hence incurs large costs due to frequent cycling, whereas the machine-learned algorithm, with access to perfect predictions of future demands over the next 10 time periods, has learned to ``smooth'' its dispatches so as not to incur cycling costs unless necessary.

\begin{figure}
    \centering
    \includegraphics[width=0.6\textwidth]{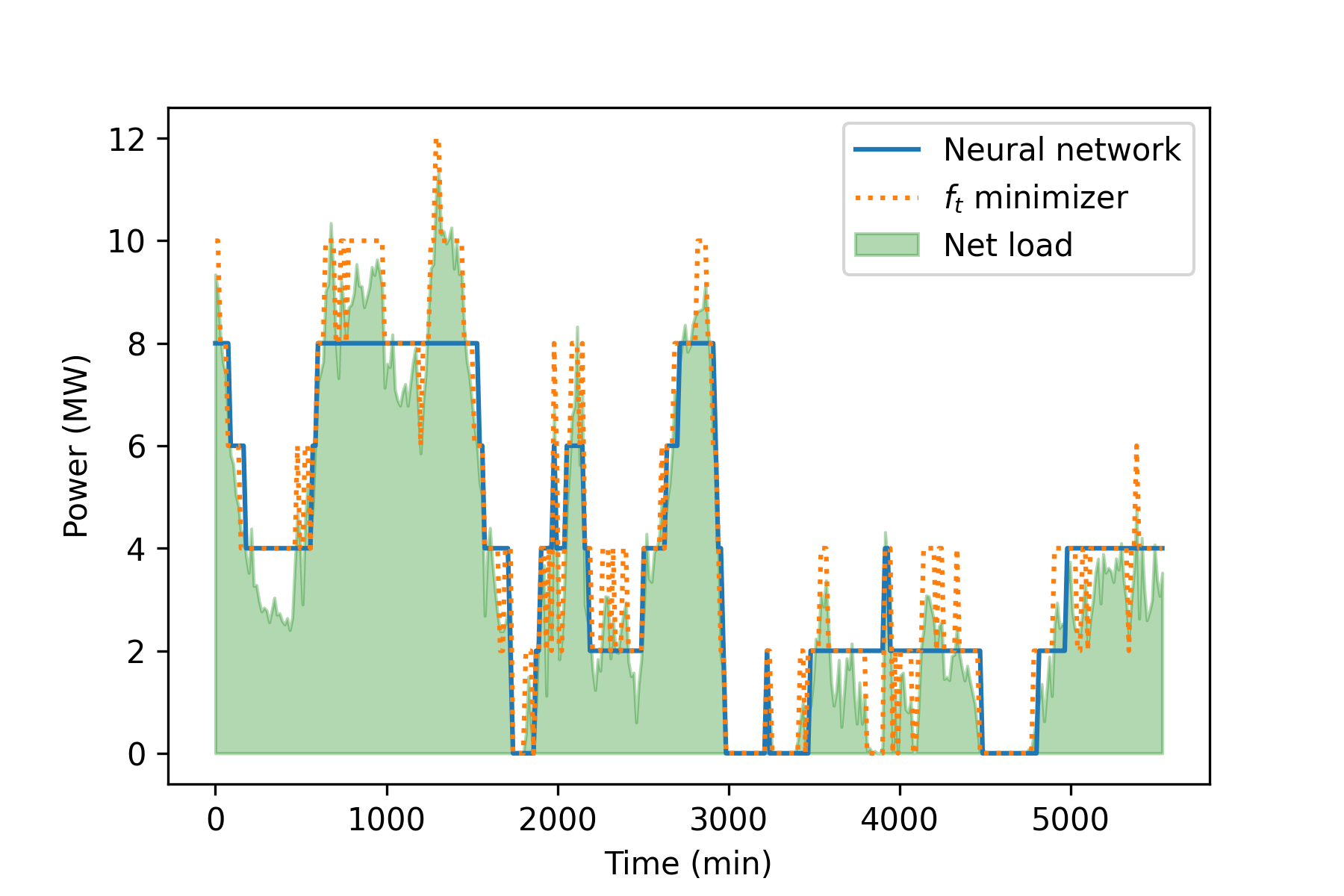}
    \caption{Comparison of the neural network dispatch vs. $f_t$ minimizer dispatch on an example net load trajectory in the test set.}
    \label{fig:ex_dispatch}
\end{figure}

Next, we evaluate the performance of AOS (Algorithm \ref{alg:main}) using the neural network's decision at time $t$ as the prediction $\tilde{x}_t$. We consider the behavior of AOS for various values of the hyperparameter $\delta$, and under two different prediction scenarios: in the first, we assume that predictions are perturbed by zero-mean i.i.d. Gaussian noise with standard deviation $\sigma$, representing the case where predictions at test time are noisier than those during training; and in the second, we assume that predictions are perturbed deterministically with a bias of value $\mu$ that is uniform across predictions at the current timestep, representing the scenario in which the model giving future net load predictions is biased due to distribution shift between training and test time. In each scenario, we vary the values of $\sigma$ and $\mu$ to evaluate the resulting impact on the neural network performance, and resultingly, on the performance of AOS.

We detail the results in the zero-mean Gaussian noise case in Figure \ref{fig:noise_performance}. It is clear that the neural network algorithm performs well in the regime of small $\sigma$, and improves upon the performance of the $f_t$ minimizer algorithm for all $\sigma \leq 0.5$. Moreover, when $\delta$ and $\sigma$ are both small, the performance of AOS nearly matches that of the neural network, and degrades as $\delta$ becomes larger. However, remarkably, the performance of AOS is vastly improved over that of the neural network in the regime of high prediction noise, and even matches or improves upon that of the $f_t$ minimizer algorithm in this regime, even when $\delta = 0.01$. This good performance is likely due both to the adaptive switching of the AOS algorithm as well as the filtering of the neural network decisions $\tilde{x}_t$ to arrive at improved decisions $p_t$. Together, these two properties allow AOS to exploit each algorithm when it is performing well, enabling AOS to achieve the ``best of both worlds'' in terms of performance. Thus this example demonstrates that AOS with a small $\delta$ can bridge the performance of the neural network algorithm and the $f_t$ minimizer algorithm across noise regimes.

\begin{figure}
\centering
\begin{subfigure}[b]{0.49\textwidth}
\centering
\includegraphics[width=\textwidth]{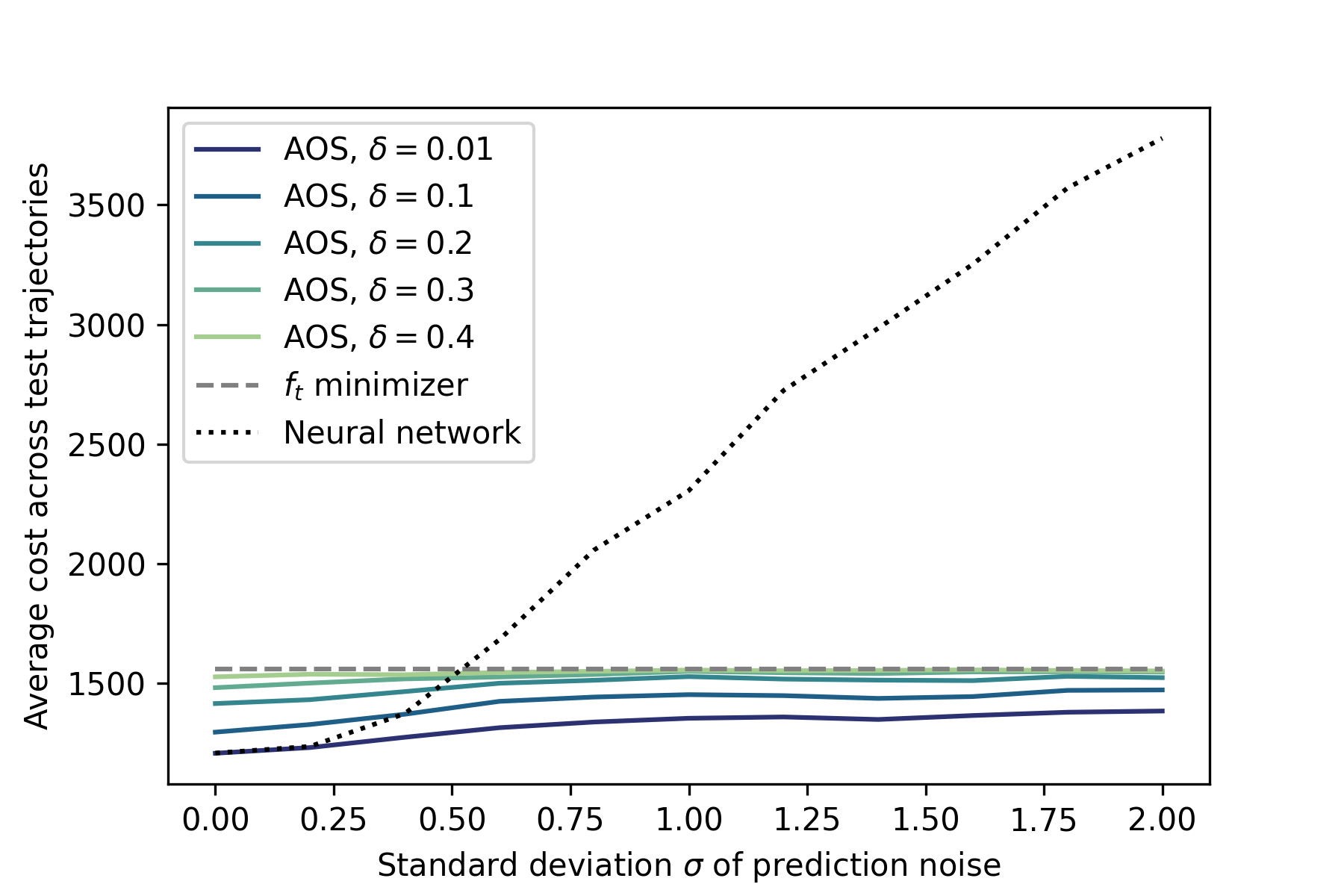}
    \caption{Performance of the AOS algorithm vs. the $f_t$ minimizer and neural network algorithms under different scenarios of prediction noise magnitude and selections of hyperparameter $\delta$.}
    \label{fig:noise_performance}
\end{subfigure}
\hfill
\begin{subfigure}[b]{0.49\textwidth}
\centering
\includegraphics[width=\textwidth]{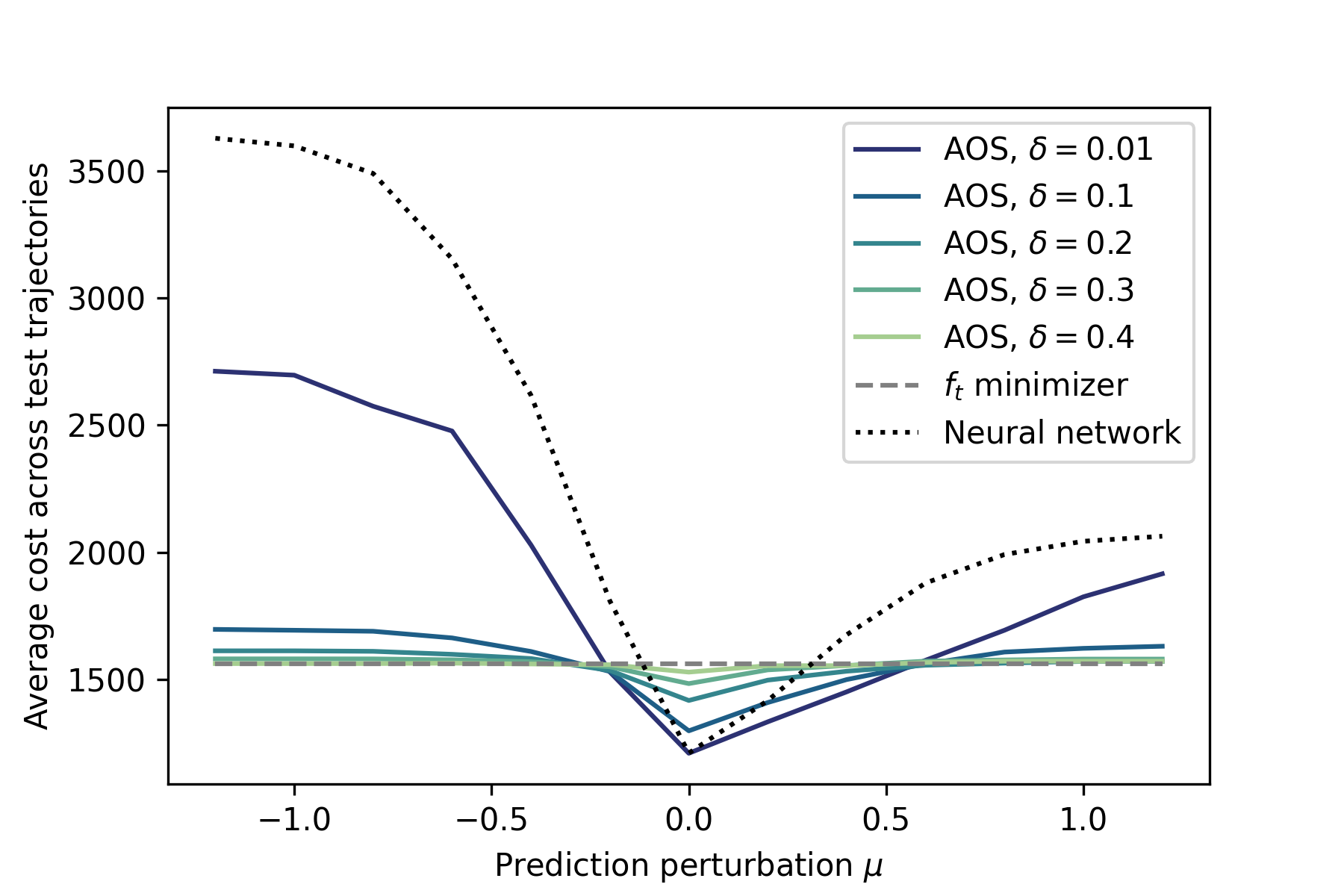}
\caption{Performance of the AOS algorithm vs. the $f_t$ minimizer and neural network algorithms under different scenarios of prediction perturbation and selections of hyperparameter $\delta$.}
\label{fig:perturb_performance}
\end{subfigure}
\end{figure}

We illustrate the performance of the AOS algorithm with various choices of $\delta$ against the neural network and $f_t$ minimizer in the deterministic prediction perturbation scenario in Figure \ref{fig:perturb_performance}. Here, the impact on neural network and AOS performance of perturbation $\mu$ being far from zero is more pronounced: for $|\mu| \approx 1$, the neural network algorithm incurs high cost, and as a result AOS incurs cost that is significantly larger than that of the $f_t$ minimizer algorithm when $\delta = 0.01$. However, as $\delta$ increases, the performance of AOS dramatically improves, rapidly approaching performance comparable to the $f_t$ minimizer. Moreover, in the $|\mu| \ll 1$ regime, we see once again that the neural network outperforms the $f_t$ minimizer, and AOS can closely match the neural network performance when $\delta$ is small. In particular, the choosing $\delta = 0.1$ seems to effectively bridge the good performance of the neural network algorithm in the $|\mu| \ll 1$ setting with the robustness of the $f_t$ minimizer in the large $|\mu|$ setting.

\section{Related Work}

This paper connects and contributes to three growing literatures: (i) online optimization without predictions, (ii) online optimization with trusted predictions, and (iii) online algorithms with untrusted predictions.  We discuss each of these in the following. \\

\noindent
\textbf{Online Optimization without Predictions.} The problem of ``smoothed'' online convex optimization was first introduced by Lin et al.~in (\citeyear{lin2012dynamic}) in the case of a one-dimensional action space.  The one-dimensional version of the problem was further studied by Bansal et al.~(\citeyear{bansal2015competitive}), who designed a $2$-competitive algorithm and later by Antoniadis and Schewior (\citeyear{antoniadis2017tight}), who prove a matching lower bound. 
Andrew et al.~(\citeyear{andrew2013tale}) investigate the compatibility between competitive ratio and sublinear regret, and prove that no algorithm may achieve both simultaneously.
Beyond the one-dimensional setting, Chen et al.~(\citeyear{chen2018smoothed}) prove that in $\R^d$ for general convex functions, the competitive ratio of any online algorithm is at least $\Omega(\sqrt{d})$ with $\ell_2$ switching costs and at least $\Omega(d)$ with $\ell_\infty$ switching costs. The authors introduce the Online Balanced Descent (OBD) algorithm which achieves a competitive ratio of $3 + O(1/\alpha)$ for $\alpha$-polyhedral functions and $\ell_2$ switching costs. 
Later, Goel and Wierman (\citeyear{goel2019online}) prove that OBD is also $3 + O(1/\alpha)$ competitive if the switching cost is the squared $\ell_2$ norm and the hitting cost functions $f_t$ are $m$-strongly convex. In this context, the authors prove that any online algorithm must have a competitive ratio of at least $\Omega(m^{-1/2})$ and introduce two algorithms, Greedy OBD (G-OBD) and Regularized OBD (R-OBD), that both have a competitive ratio of $O(m^{-1/2})$.

Note that all results in the case of online \emph{non-convex} optimization are very limited. Lin et al.~(\citeyear{lin2020online}) provide an algorithm, in the setting with no perfect predictions of future hitting costs, that greedily follows the minimizers and achieves competitive ratio $1 + 2 / \alpha$. Later, Zhang et al.~(\citeyear{zhang2021revisiting}) refined this result, proving that the greedy algorithm is $\max\{1, 2 / \alpha\}$-competitive. \\

\noindent
\textbf{Online Optimization with Trusted Predictions.} A separate but related line of work seeks to provide improved competitive bounds through the use of trusted predictions.  Typically, such predictions are assumed to have no error, e.g., \cite{li2018using, lin2020online}, but some papers assume known stochastic predictions error, e.g., \cite{chen2015online, chen2016using, comden2019online}. 
An early paper in this area by Chen et al.~(\citeyear{chen2015online}) introduces the most common prediction error model (a colored noise model). 
Chen et al.~(\citeyear{chen2016using}) consider a similar stochastic prediction model, but weaken the assumptions on the hitting cost functions.
A different style of algorithm was introduced by Li et al.~(\citeyear{li2018using}), which assumes that the next $w$ functions are known perfectly.
Later, Li and Li~(\citeyear{li2020leveraging}) generalize this to imperfect predictions. 

The case of online non-convex optimization has proven difficult and the only result we are aware of is from Lin et al.~(\citeyear{lin2020online}), which assumes perfect predictions and provides two sufficient conditions on the structure of $f_t$ and the switching cost (an order of growth and an approximate triangle inequality condition) such that their algorithm, Synchronized Fixed Horizon Control (SFHC), achieves a competitive ratio of $1 + O(1 / w)$. \\

\noindent
\textbf{Online Algorithms with Untrusted Predictions.} This work is the first to look at the use of untrusted predictions in the setting of online optimization, with more recent work such as \cite{christianson2022cbc} building upon our results and techniques in the more restricted setting of \emph{convex} body and function chasing. The framework for studying untrusted predictions that we adapt was introduced by Lykouris and Vassilvitskii (\citeyear{lykouris2018competitive}) in the context of caching and by Mahdian et al. (\citeyear{MNS12}) in the context of load balancing and facility location. At this point, the framework has received significant attention and has been applied to bipartite matching \cite{kumar2018semi}, ski rental problems \cite{KPS18, GP19, WZ20, AGP20, WLW20}, set cover \cite{BMS20}, online scheduling \cite{LLMV20, Mitzenmacher2020scheduling, ALT21}, bloom-filters \cite{mitzenmacher2018model}, frequency estimation \cite{hsu2019learning}, matching and secretary problems \cite{LMRX21, DLPV21, AGKK20, JLTZ21}, capacity scaling in data-centers \cite{BMRS20, rutten2021new}, and more \cite{sun2021pareto, maghakian2021leveraging, lee2021online}.

Recent work in \cite{antoniadisOnlineMetricAlgorithms2020} proposes combining individually robust and consistent algorithms for Metrical Task Systems by applying classic online algorithms for the $k$-server problem \cite{fiat1994competitive} and the $k$-experts problem \cite{blum2000line}. However, their results do not yield tight bounds on robustness and consistency in our setting. If $\alpha < 2$, their deterministic algorithm achieves 9-consistency and $\frac{18}{\alpha}$-robustness, and their randomized algorithm achieves $(1+\delta)$-consistency and $\frac{2(1+\delta)}{\alpha}$-robustness in expectation but has a large additive factor of $\mathcal{O}(D/\delta)$ on both bounds, where $D$ is the diameter of the problem instance. However, our main result shows that it is possible to obtain deterministic robustness and near-perfect consistency \emph{without} such a dependence on bounded diameter. 

\section{Concluding Remarks}

This paper considers online non-convex optimization with untrusted predictions. We have introduced Adaptive Online Switching (AOS) and proved that AOS nearly matches the performance of the hindsight optimal if predictions are accurate. At the same time, if feasible, AOS has a worst-case performance bound even if predictions are completely inaccurate. We validate the performance guarantees of AOS with experiments reported in Section \ref{sec:numerics}, and complement our results with a lower bound on the trade-off between consistency and robustness of \emph{any} algorithm, showing that AOS has an order-optimal trade-off. Moreover, we have proven the necessity of memory in AOS and introduced a specialized algorithm, Adaptive Online Balanced Descent, for the special case of one-dimensional, convex optimization.  An open question, which we leave for future work, is that of tight upper and lower bounds on the trade-off between consistency and robustness for online \emph{convex} optimization in general dimensions. Note that any algorithm for online convex optimization must again use memory due to Theorem \ref{thm:memoryless}, but the lower bound in Theorem \ref{thm:lowerbound} might be improved upon in this case. 

\def\UrlBreaks{\do\/\do-}
\bibliographystyle{apalike}
\bibliography{main}

\appendix

\section{Online Convex Optimization with Bregman Divergence Switching Costs}
\label{appendix:soco_squared_dist}

To highlight the challenges for trade-offs between consistency and robustness in related online optimization settings, we prove a lower bound on the trade-off between consistency and robustness for the case of  squared $\ell_2$ switching costs. In particular, we consider the following assumptions.
\begin{enumerate}
    \item The functions $f_t$ are $\alpha$-strongly convex and $\beta$-strongly smooth.
    \item The space $X = \R$ and the distance $\lVert x_{t-1} - x_t \rVert = \frac{1}{2} \lVert x_{t-1} - x_t \rVert_2^2$.
\end{enumerate}
Note that the squared $\ell_2$ switching costs are an instance of the more general Bregman divergence switching costs \cite{goel2019beyond}. We now state our lower bound.
\begin{lemma}
Let $\mathcal{A}$ be any algorithm for the online convex optimization problem. The next two statements are mutually exclusive.
\begin{enumerate}
    \item $\mathcal{A}$ is $\gamma$-robust where $\gamma < \infty$.
    \item $\mathcal{A}$ is $c$-consistent where
    \begin{equation}
    \label{eq:consistentlowerbound}
        c < \frac{1 + \sqrt{1 + 4 \alpha^{-1}}}{2 (1 + \beta^{-1})}.
    \end{equation}
\end{enumerate}
\end{lemma}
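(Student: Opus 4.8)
The plan is to construct an adversarial instance that forces any robust algorithm to incur consistency cost at least the claimed bound. The key obstacle is that, unlike the polyhedral case (Theorem~\ref{thm:lowerbound}), the squared $\ell_2$ switching cost allows an algorithm to ``hedge'' by moving a small fraction of the way toward a target at quadratically small switching cost, so the adversary must work harder to extract a lower bound. The natural approach is a two-phase construction: a short initial phase in which the prediction $\tilde{x}$ points to a location far from the current position, followed by a long ``stationary'' phase with hitting cost functions centered at a point that either coincides with the prediction (the consistent-prediction world) or coincides with the algorithm's starting region (the adversarial world), and an indistinguishability argument that the algorithm cannot tell which world it is in during the initial phase.

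First I would set up the instance on $M = \R$ with $f_t(x) = \tfrac{\alpha}{2}(x - a_t)^2$ for a sequence of centers $a_t$; these are $\alpha$-strongly convex and, if we also want $\beta$-strong smoothness, we can equivalently take $f_t(x) = \tfrac{\alpha}{2}(x-a_t)^2$ and note that the relevant ``curvature budget'' available to the algorithm to trade hitting cost against switching cost is governed by the ratio $\alpha$ versus the effective $\tfrac12\|\cdot\|_2^2$ coefficient; the $\beta^{-1}$ in the denominator of \eqref{eq:consistentlowerbound} will enter because the adversary can also use a single final function of smoothness $\beta$ to ``charge'' the optimal trajectory a controlled amount. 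Concretely I would: (i) start the algorithm at $x_0 = 0$; (ii) have the prediction insist on moving to some point $z > 0$; (iii) give the algorithm hitting cost functions $f_t$ that, in the consistent world, are centered at $z$ (so $\textsc{Opt}$ just moves to $z$ and pays a bounded constant), and in the adversarial world are centered back at $0$ so that any displacement toward $z$ is wasted. The crux is the recursive/quasi-static analysis of the stationary phase: if the algorithm sits at position $x$ while the function is centered at $c$, it pays per-round hitting cost $\tfrac{\alpha}{2}(x-c)^2$, and the only way to reduce this is to move, at switching cost $\tfrac12(\Delta x)^2$ per step. Summing the geometric-type recursion that describes the optimal such trade-off for the algorithm yields the characteristic quadratic whose root is $\tfrac{1 + \sqrt{1 + 4\alpha^{-1}}}{2}$; this is the same algebra that produces the golden-ratio-type constants in classical smoothed online convex optimization lower bounds \cite{bansal2015competitive, antoniadis2017tight}.

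The argument then goes: suppose $\mathcal{A}$ is $\gamma$-robust with $\gamma < \infty$. Run the initial phase; the algorithm must commit to some trajectory that, by robustness in the adversarial world (centers at $0$, so $\textsc{Opt} \to 0$ up to lower-order terms while any movement is pure loss), cannot move more than an $O(\gamma^{-1})$-controlled amount toward $z$ — more precisely, over a long adversarial phase the total cost of $\mathcal{A}$ must stay bounded, which (via the recursion above) pins down how far $\mathcal{A}$ could have traveled. Then switch to the consistent world: because $\mathcal{A}$'s position was forced to lag behind $z$, in the consistent world where $\textsc{Opt}$ immediately sits at $z$ and pays only the startup cost of moving there (scaled by the strong-smoothness parameter $\beta$, giving the $(1+\beta^{-1})$ factor in the denominator), $\mathcal{A}$ pays the accumulated lag as extra hitting cost. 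Taking the long-horizon limit and optimizing the free parameters $z$ and the phase lengths, the ratio of $\mathcal{A}$'s cost to $\textsc{Opt}$'s cost in the consistent world is driven down to exactly $\frac{1 + \sqrt{1 + 4\alpha^{-1}}}{2(1 + \beta^{-1})}$, establishing that $c$ cannot be smaller than this. I expect the main technical obstacle to be making the ``indistinguishability forces a lag'' step quantitatively tight — i.e., showing that robustness really does cap the algorithm's progress at precisely the level that makes the quadratic recursion bind — rather than merely at some weaker constant; handling the boundary/truncation effects of finite $T$ and folding them into the $\mathcal{O}(1)$ or $o(\cdot)$ slack will require care analogous to the truncation bookkeeping in the proof of Theorem~\ref{thm:lowerbound}.
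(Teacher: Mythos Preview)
Your two-world skeleton is right, but you are missing the observation that makes the paper's proof a five-line argument rather than a delicate quantitative one: in the adversarial world the optimal cost is \emph{exactly} zero, not merely ``$\textsc{Opt} \to 0$ up to lower-order terms.'' Concretely, the paper takes $x_0 = 0$ and $f_t(x) = \tfrac{\alpha}{2}x^2$ for $t = 1,\dots,T-1$, with predictions equal to the hindsight-optimal trajectory for the world where $f_T(x) = \tfrac{\beta}{2}(x-\delta)^2$. If the algorithm ever leaves the origin during the first $T-1$ steps, the adversary instead sets $f_T(x) = \tfrac{\beta}{2}x^2$; then staying at $0$ forever has cost zero, so any $\gamma$-robust algorithm must also have cost zero, contradiction. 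Hence a robust algorithm is pinned at $x_t = 0$ for all $t \leq T-1$. There is no ``$O(\gamma^{-1})$-controlled amount'' of movement to track, no recursion to make bind, and no truncation bookkeeping: the hedging concern you raise is exactly what this $\textsc{Opt}=0$ trick annihilates, since even an infinitesimal move gives positive cost over zero optimal cost.

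Once the algorithm is forced to sit at $0$ through step $T-1$, the consistent world is revealed ($f_T(x)=\tfrac{\beta}{2}(x-\delta)^2$). The algorithm then pays at least $\min_x \tfrac{1}{2}x^2 + \tfrac{\beta}{2}(x-\delta)^2 = \tfrac{\delta^2}{2(1+\beta^{-1})}$ at the final step, while the hindsight optimum (which spread its motion toward $\delta$ over all $T$ steps) costs $\tfrac{\delta^2}{4}(-\alpha + \sqrt{\alpha^2+4\alpha})$ as $T\to\infty$ by the known closed form from \cite{goel2019beyond}. Dividing gives the bound in \eqref{eq:consistentlowerbound}. Your identification of where $\beta$ enters and of the quadratic producing $\tfrac{1+\sqrt{1+4\alpha^{-1}}}{2}$ is correct; what you should drop is the entire ``robustness quantitatively caps progress'' machinery, which is both unnecessary and, as sketched, not actually carried out.
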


Notice that the consistency bound in \eqref{eq:consistentlowerbound} is achieved by an online algorithm which ignores predictions in \cite{goel2019beyond} in the case that $\beta = \infty$. Thus, this bound shows the difficulty in extracting value from untrusted predictions in this setting.

\begin{proof}
Fix any $\delta > 0$. Let $x_0 = 0$ and $f_t(x) = \alpha x^2 / 2$ for $1 \leq t \leq T-1$. The predictions $\tilde{x}_t$ are equal to the hindsight optimal as if $f_T(x) = \beta (x - \delta)^2 / 2$. We distinguish two cases.
\begin{enumerate}
    \item Assume there exists $1 \leq t \leq T-1$ such that $x_t > 0$. Then, let $f_T(x) = \beta x^2 / 2$. The optimal algorithm achieves a cost of zero by letting $o_t = 0$ for all $1 \leq t \leq T$. However, the cost of $\mathcal{A}$ is at least $f_t(x_t) = \alpha x_t^2 / 2 > 0$ and hence the competitive ratio is unbounded. Therefore, $\mathcal{A}$ is not robust.
    \item Assume that $x_t = 0$ for $1 \leq t \leq T$. Then, let $\theta_T = (\delta, \beta)$. Goel et al. \cite{goel2019beyond} prove that the cost of the hindsight optimal is at most $\delta^2 (-\alpha + \sqrt{\alpha^2 + 4 \alpha}) / 4$ asymptotically as $T \to \infty$. The algorithm $\mathcal{A}$ suffers cost at least
    \begin{equation}
        \min_{x \in [0, \delta]} \frac{x^2}{2} + \frac{\beta (x - \delta)^2}{2} = \frac{\delta^2}{2 (1 + \beta^{-1})},
    \end{equation}
    and hence the competitive ratio is at least
    \begin{equation}
    \label{eq:lbconst}
        \frac{2}{(1 + \beta^{-1}) (-\alpha + \sqrt{\alpha^2 + 4 \alpha})} = \frac{1 + \sqrt{1 + 4 \alpha^{-1}}}{2 (1 + \beta^{-1})}.
    \end{equation}
    Therefore, since the predictions are perfect, $\mathcal{A}$ is not $c$-consistent for any $c$ strictly smaller than \eqref{eq:lbconst}.
\end{enumerate}
\end{proof}

\section{Proof of Lemma \ref{lemma:ftpalg}}

The proof of the lemma follows largely along the same lines as the proof in \cite{antoniadisOnlineMetricAlgorithms2020}, but we should be slightly careful as the points $p_t$ are not the true minimizer but rather an approximation thereof.

\begin{proof}[Proof of Lemma \ref{lemma:ftpalg}]
Let
\begin{equation}
\begin{multlined}
    C_t = \sum_{i=1}^t \left( f_i(p_i) + \lVert p_i - p_{i-1} \rVert \right) + \lVert o_{t+1} - p_t \rVert \\ + \left( 1 + \varepsilon(p) \right) \left( f_{t+1}(o_{t+1}) + \sum_{i=t+2}^T \left( f_i(o_i) + \lVert o_i - o_{i-1} \rVert \right) \right),
\end{multlined}
\end{equation}
for $1 \leq t \leq T - 1$ and $C_T = \sum_{i=1}^T \left( f_i(p_i) + \lVert p_i - p_{i-1} \rVert \right)$. Then, by the triangle inequality,
\begin{equation}
\begin{multlined}
    C_t - C_{t-1}
    \leq 2 \lVert o_t - p_t \rVert + f_t(p_t) - \left( 1 + \varepsilon(p) \right) f_t(o_t) \\
    \leq 2 \lVert o_t - \tilde{x}_t \rVert - \left( 1 + \varepsilon(p) \right) f_t(o_t) + 2 \lVert \tilde{x}_t - p_t \rVert + f_t(p_t) \\
    \leq \left( 4 + 2 \varepsilon(p) \right) \lVert o_t - \tilde{x}_t \rVert,
\end{multlined}
\end{equation}
where we used that $p_t$ is the approximate minimizer of \eqref{eq:ftp} in the last inequality. Then, summing the inequality over $t$ and observing that $C_0$ is at most $\left( 1 + \varepsilon(p) \right)$ times the cost of the optimal trajectory whereas $C_T$ is the cost of \textsc{FtP} yields the competitive ratio bound.
\end{proof}

\section{Proof of Theorem \ref{thm:main}}
\label{app:mainproof}

Recall Theorem \ref{thm:main} as stated in the previous section. The theorem summarizes two separate bounds on the competitive ratio of the AOS algorithm (Algorithm \ref{alg:main}), which is represented by the two terms in the minimum. We provide a proof of Theorem \ref{thm:main} here, where we treat the two bounds separately and prove the following two propositions, which are slightly stronger than Theorem \ref{thm:main}. Let $\textsc{Alg}(t) := f_t(x_t) + \lVert x_t - x_{t-1} \rVert$ denote the cost incurred by the AOS algorithm at time $t$.

\begin{proposition} (Consistency)
\label{prop:consistent}
\begin{equation}
    \sum_{i = 1}^T \textsc{Alg}(i) \leq (1 + \delta + \gamma) \sum_{i = 1}^T \textsc{Adv}(i)
\end{equation}
\end{proposition}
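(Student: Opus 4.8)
The plan is to track the cost of AOS phase by phase, comparing against the accumulated $\textsc{Adv}$ cost on the corresponding interval, and to show that the only ``extra'' cost AOS ever pays --- beyond what $\textsc{Adv}$ already pays --- is (a) the switching costs incurred when AOS jumps from the $\textsc{FtP}$ trajectory $p_t$ to the minimizer trajectory $v_t$ and back, and (b) the hitting/switching cost of whatever minimizer-following segments it executes. The while-loop conditions in lines~\ref{line:whileadv} and~\ref{line:whilerob} are designed precisely so that each of these extra costs is bounded by a $\delta$-fraction of a suitable sum of $\textsc{Adv}$ terms, and the intervals over which these fractions are charged are disjoint (or overlap only a bounded number of times), so that summing yields a total overhead of at most $2\delta \sum_t \textsc{Adv}(t)$.

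Concretely, I would proceed as follows. \emph{First}, fix one iteration $k$ of the outer \textbf{for} loop, with the ``follow predictions'' sub-phase running on times $T_k, \dots, M_k - 1$ (plus the single transition step at $M_k$), then the ``follow minimizers'' sub-phase on times $M_k+1, \dots, T_{k+1}-1$, and then the transition back at $T_{k+1}$. \emph{Second}, on the follow-predictions sub-phase AOS plays exactly $x_t = p_t$, so its cost there is exactly $\sum \textsc{Adv}(i)$ over that range --- no overhead. The crucial point is that the loop in line~\ref{line:whileadv} exits only when
$\sum_{i=T_k}^{t-1}\textsc{Adv}(i) + \textsc{Rob}(t) + d(p_{t-1},v_{t-1}) + d(v_t,p_t) < (1+\delta)\sum_{i=T_k}^t \textsc{Adv}(i)$,
i.e.\ the cost of ``jumping to the minimizer at time $t$, paying $\textsc{Rob}(t)$, and jumping back'' is within a $\delta$-fraction of the advice cost on $[T_k, t]$; this is what licenses AOS to make the excursion at $M_k$ without blowing consistency. \emph{Third}, on the follow-minimizers sub-phase AOS plays $x_t = v_t$ and incurs $\textsc{Rob}(t)$; the loop in line~\ref{line:whilerob} continues exactly while $\sum_{i=M_k+1}^t \textsc{Rob}(i) + d(v_t,p_t) - d(v_{M_k},p_{M_k}) \le (1+\delta)\sum_{i=M_k+1}^t \textsc{Adv}(i) + \delta\sum_{i=T_k}^t \textsc{Adv}(i)$, so the accumulated minimizer cost on this sub-phase --- together with the cost $d(v_{T_{k+1}-1}, p_{T_{k+1}-1})$ of switching back to the $p$-trajectory, minus a credit $d(v_{M_k},p_{M_k})$ from the jump out --- is bounded by $(1+\delta)$ times the advice cost on $[M_k+1, T_{k+1}-1]$ plus $\delta$ times the advice cost on $[T_k, T_{k+1}-1]$. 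Here I would have to be slightly careful about the ``last'' step of each while loop, where the condition was satisfied at $t-1$ and the algorithm has already committed $x_t$: one shows that the while condition can only be violated by an additive amount controlled by $\textsc{Adv}(t)$ (using $\alpha$-polyhedrality and the triangle inequality to bound a single term $\textsc{Rob}(t)$ or $d(v_t,p_t)$ against $\textsc{Adv}(t)$ plus earlier quantities), so the overshoot at the boundary is itself absorbed into a $\textsc{Adv}$-fraction.

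\emph{Fourth and finally}, I would sum over all phases $k$. On each phase the overhead splits into a ``$\delta$ times advice on the whole phase $[T_k, T_{k+1}-1]$'' piece coming from line~\ref{line:whileadv} (the jump-out budget) and a ``$\delta$ times advice on $[T_k, T_{k+1}-1]$'' piece coming from the extra $\delta\sum_{i=T_k}^t\textsc{Adv}(i)$ term in line~\ref{line:whilerob} (the jump-back budget), while the $(1+\delta)$-factor on the minimizer sub-phase matches the advice cost that AOS ``skipped'' by not playing $p_t$ there. Since the phases $[T_k, T_{k+1}-1]$ partition $\{1,\dots,T\}$, the two $\delta$-fractions sum to at most $2\delta\sum_{i=1}^T\textsc{Adv}(i)$, and the remaining $(1+0)$ part sums to at most $\sum_{i=1}^T\textsc{Adv}(i)$, giving $\sum_i \textsc{Alg}(i) \le (1+2\delta)\sum_i \textsc{Adv}(i)$. \textbf{The main obstacle} I anticipate is the careful bookkeeping at the phase boundaries: making the telescoping of the $d(v_t,p_t)$ and $d(v_{M_k},p_{M_k})$ switching-credit terms line up exactly across consecutive phases, and showing that the while-loop ``one step too far'' effect at each of $M_k$ and $T_{k+1}$ does not accumulate --- this is where one must invoke $\alpha$-polyhedrality (Definition~\ref{def:polyhedral}) to control $\textsc{Rob}(t)$ and a single switching term by $O(1/\alpha)\cdot\textsc{Adv}(t)$ and confirm that the loop conditions were designed with exactly the right slack to swallow it.
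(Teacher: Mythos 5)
Your decomposition into stages $[T_k, T_{k+1}-1]$, the observation that the follow-predictions sub-phase incurs no overhead, the use of the while-loop conditions as the $\delta$-budgets, and the telescoping of the switching-credit terms across stage boundaries all match the paper's approach. The paper packages this as a per-stage invariant (its Lemma~\ref{lemma:consistent}), showing $\sum_{i=T_k}^{t} \textsc{Alg}(i) - d(v_{T_k-1},p_{T_k-1}) \le (1+2\delta)\sum_{i=T_k}^{t}\textsc{Adv}(i) - d(x_t,p_t)$ for all $T_k \le t \le T_{k+1}-1$, so the correction terms telescope cleanly using $x_{T_{k+1}-1} = v_{T_{k+1}-1}$; your description of the telescoping is consistent with this.

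However, the ``main obstacle'' you anticipate --- a one-step-too-far overshoot at $M_k$ and $T_{k+1}$ that must be swallowed via $\alpha$-polyhedrality bounding $\textsc{Rob}(t)$ by $O(1/\alpha)\cdot\textsc{Adv}(t)$ --- is a misconception, and following it would lead you astray. The while condition in line~\ref{line:whileadv} at time $t$ already contains the cost of the \emph{next} step, $\textsc{Rob}(t) + d(p_{t-1},v_{t-1}) + d(v_t,p_t)$, so when the loop exits at $t = M_k$, the \emph{negation} of the condition directly bounds the excursion cost (hitting at $v_{M_k}$, jumping over, and eventually jumping back) by $(1+\delta)\sum_{i=T_k}^{M_k}\textsc{Adv}(i) - d(v_{M_k},p_{M_k})$. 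Likewise the condition in line~\ref{line:whilerob}, which holds at every $t$ in the minimizer sub-phase, already includes $d(v_t,p_t)$, i.e.\ the cost of a hypothetical return jump, so there is nothing to ``swallow'' at $T_{k+1}$ either. The paper's consistency proof uses \emph{only} the triangle inequality and the two loop conditions; $\alpha$-polyhedrality plays no role in Proposition~\ref{prop:consistent} (it is needed only in Lemma~\ref{lemma:competitive} for the robustness bound). If you found yourself needing a $1/\alpha$ factor in the consistency argument, that would be a sign something had gone wrong, since the consistency bound $(1+2\delta)$ is $\alpha$-free.
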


\begin{proposition} (Robustness)
\label{prop:competitive}
\begin{equation}
    \sum_{i = 1}^T \textsc{Alg}(i) \leq \left( \frac{4 U(\infty) + 4 + 2 \delta}{\gamma} + 2 U(\infty) + 3 \right) \sum_{i = 1}^T \textsc{Rob}(i) + 2 U(\infty) \sum_{i = 1}^T \lVert v_i - r_i \rVert,
\end{equation}
where $U(\infty) := \sup_{t \geq 1} U(t)$ and $U(t)$ is as defined in \eqref{eq:duallp}. Moreover, assume that $(\alpha, \delta)$ is feasible and $\frac{2}{\alpha \delta} \in \N$. Then,
\begin{equation}
    \sum_{i = 1}^T \textsc{Alg}(i)
    \leq \left( \frac{4 \tilde{U} + 4 + 2 \delta}{\gamma} + 2 \tilde{U} + 3 \right) \sum_{i = 1}^T \textsc{Rob}(i) + 2 \tilde{U} \sum_{i = 1}^T \lVert v_i - r_i \rVert,
\end{equation}
where
\begin{equation}
    \tilde{U} = \alpha \left( \frac{2}{\alpha + \delta (1 + \alpha)} \right)^{2 / \alpha \delta} + \frac{2}{(2 - \alpha - \delta (1 + \alpha))^2} \left( \alpha \left( \frac{2}{\alpha + \delta (1 + \alpha)} \right)^{2 / \alpha \delta} - \frac{2 - \alpha}{\delta} + 1 \right).
\end{equation}
\end{proposition}

We note that the assumption that $2 / (\alpha \delta) \in \N$ in the statement of Proposition~\ref{prop:competitive} is without loss of generality, but it prevents rounding symbols from appearing in the notation.

\begin{proof}[Proof of Theorem \ref{thm:main}]
Note that Proposition \ref{prop:consistent} and Lemma \ref{lemma:ftpalg} imply that
\begin{equation}
    \sum_{i = 1}^T \textsc{Alg}(i) \leq (1 + \delta + \gamma) \sum_{i = 1}^T \textsc{Adv}(i) \leq (1 + \delta + \gamma) (1 + \varepsilon(p) + (4 + 2 \varepsilon(p)) \eta) \sum_{i = 1}^T \textsc{Opt}(i),
\end{equation}
which proves that the consistency of the AOS algorithm. To show the robustness bound, note that
\begin{equation}
\begin{aligned}
    \sum_{i = 1}^T \textsc{Rob}(i)
    &= \sum_{i = 1}^T \left( f_i(r_i) + \lVert r_i - r_{i-1} \rVert \right) \\
    &\leq \sum_{i = 1}^T \left( \frac{f_i(r_i)}{f_i(v_i)} f_i(v_i) + \lVert v_i - v_{i-1} \rVert + \lVert v_i - r_i \rVert + \lVert v_{i-1} - r_{i-1} \rVert \right) \\
    &\leq (1 + \varepsilon(r)) \sum_{i = 1}^T \left( f_i(v_i) + \lVert v_i - v_{i-1} \rVert \right) + \frac{2}{\alpha} \sum_{i = 1}^T \left( f_i(r_i) - f_i(v_i) \right) \\
    &\leq \left( 1 + \varepsilon(r) + \frac{2 \varepsilon(r)}{\alpha} \right) \sum_{i = 1}^T \left( f_i(v_i) + \lVert v_i - v_{i-1} \rVert \right) \\
    &\leq \left( 1 + \varepsilon(r) + \frac{2 \varepsilon(r)}{\alpha} \right) \max\left\{ 1, \frac{2}{\alpha} \right\} \sum_{i = 1}^T \textsc{Opt}(i),
\end{aligned}
\end{equation}
where the first inequality follows by the triangle inequality and the last inequality follows since the algorithm that follows the minimizers has a competitive ratio of $\max\{1, 2 / \alpha\}$. Similarly,
\begin{equation}
\begin{aligned}
    \sum_{i = 1}^T \lVert v_i - r_i \rVert
    &\leq \frac{1}{\alpha} \sum_{i = 1}^T \left( f_i(r_i) - f_i(v_i) \right)  \\
    &\leq \frac{\varepsilon(r)}{\alpha} \sum_{i = 1}^T \left( f_i(v_i) + \lVert v_i - v_{i-1} \rVert \right)
    \leq \frac{\varepsilon(r)}{\alpha} \max\left\{ 1, \frac{2}{\alpha} \right\} \sum_{i = 1}^T \textsc{Opt}(i),
\end{aligned}
\end{equation}
Therefore, Proposition \ref{prop:competitive} implies that
\begin{equation}
\begin{aligned}
    \sum_{i = 1}^T \textsc{Alg}(i)
    &\leq \left( \frac{4 \tilde{U} + 4 + 2 \delta}{\gamma} + 2 \tilde{U} + 3 \right) \sum_{i = 1}^T \left( \textsc{Rob}(i) + \lVert v_i - r_i \rVert \right) \\
    &\leq \left( \frac{4 \tilde{U} + 4 + 2 \delta}{\gamma} + 2 \tilde{U} + 3 \right) \left( 1 + \varepsilon(r) + \frac{2\varepsilon(r)}{\alpha} \right) \max\left\{ 1, \frac{2}{\alpha} \right\} \sum_{i = 1}^T \textsc{Opt}(i),
\end{aligned}
\end{equation}
which completes the proof of the theorem.
\end{proof}

We detail the proofs of Proposition \ref{prop:consistent} and \ref{prop:competitive} in the next two sections, respectively.

\subsection{Proof of Consistency Bound}

Recall that the AOS algorithm works in cycles or \emph{stages}. A stage $k \in \N$ starts at time $T_k$ when the algorithm switches to the advice and ends at the time $T_{k+1}-1$ when the algorithm again switches back to  the advice after having switched to the greedy algorithm in between. 
Also, recall that the time when the algorithm switches to the greedy algorithm in the $k$-th stage is denoted by $M_k$.
In the rest of the proofs, whenever we refer to $T_k$ (and $M_k$), we will implicitly assume that $k$ is such that $T_k \leq T$ (and $M_k \leq T$, respectively).
The lemma below proves that the AOS algorithm is $(1 + \delta + \gamma)$-consistent within each stage.

\begin{lemma}
\label{lemma:consistent}
For all $k \in \N$ and $T_k \leq t \leq T_{k+1}-1$,
\begin{equation}
    \sum_{i = T_k}^t \textsc{Alg}(i) - \lVert r_{T_k-1} - p_{T_k-1} \rVert
    \leq (1 + \delta + \gamma) \sum_{i = T_k}^t \textsc{Adv}(i) - \lVert x_t - p_t \rVert.
\end{equation}
\end{lemma}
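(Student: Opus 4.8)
\textbf{Proof proposal for Lemma \ref{lemma:consistent}.}

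The plan is to induct on $t$ within a fixed stage $k$, tracking the potential-like quantity
$\Phi_t := \sum_{i=T_k}^t \textsc{Alg}(i) - d(v_{T_k-1},p_{T_k-1}) - (1+2\delta)\sum_{i=T_k}^t \textsc{Adv}(i) + d(x_t,p_t)$
and showing $\Phi_t \le 0$ for all $T_k \le t \le T_{k+1}-1$. A stage decomposes into three phases: the ``following the advice'' phase $T_k \le t < M_k$ (during which $x_t = p_t$ and the while-loop on line \ref{line:whileadv} holds), the single transition step at $t = M_k$ where $x_{M_k} = v_{M_k}$ and the first while-loop fails, and the ``following the minimizers'' phase $M_k < t \le T_{k+1}-1$ where $x_t = v_t$ and the while-loop on line \ref{line:whilerob} holds. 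I would handle the phases in this order.

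First, in the advice phase: since $x_t = p_t$, we have $\textsc{Alg}(i) = f_i(p_i) + d(p_i,p_{i-1}) = \textsc{Adv}(i)$ for $T_k < i < M_k$, except that the very first step $i = T_k$ may incur a jump $d(x_{T_k},x_{T_k-1}) = d(p_{T_k}, x_{T_k-1})$ where $x_{T_k-1}$ is the previous stage's last decision $v_{T_k-1}$. By the triangle inequality $d(p_{T_k},v_{T_k-1}) \le d(p_{T_k},p_{T_k-1}) + d(p_{T_k-1},v_{T_k-1})$, so $\sum_{i=T_k}^t \textsc{Alg}(i) \le \sum_{i=T_k}^t \textsc{Adv}(i) + d(v_{T_k-1},p_{T_k-1})$, and since $x_t = p_t$ gives $d(x_t,p_t) = 0$, the claimed inequality holds with room to spare (even with the weaker factor $1$ rather than $1+2\delta$). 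For $k=1$ one uses $p_0 = x_0$ so the $d(v_0,p_0)$ term is $d(x_0,x_0)=0$ and the base case is immediate.

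Second, the transition step $t = M_k$: here $x_{M_k} = v_{M_k}$, so $\textsc{Alg}(M_k) = f_{M_k}(v_{M_k}) + d(v_{M_k},x_{M_k-1}) = f_{M_k}(v_{M_k}) + d(v_{M_k},p_{M_k-1})$ (since $x_{M_k-1} = p_{M_k-1}$ when $M_k > T_k$; the edge case $M_k = T_k$ needs $x_{M_k-1} = v_{T_k-1}$ separately). The key is that the while-loop on line \ref{line:whileadv} \emph{just failed} at $t = M_k$, i.e.
$\sum_{i=T_k}^{M_k-1}\textsc{Adv}(i) + \textsc{Rob}(M_k) + d(p_{M_k-1},v_{M_k-1}) + d(v_{M_k},p_{M_k}) < (1+\delta)\sum_{i=T_k}^{M_k}\textsc{Adv}(i)$.
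I would relate $\textsc{Alg}(M_k)$ to the left-hand side of this failed condition: $d(v_{M_k},p_{M_k-1}) \le d(v_{M_k},v_{M_k-1}) + d(v_{M_k-1},p_{M_k-1})$ and $f_{M_k}(v_{M_k}) + d(v_{M_k},v_{M_k-1}) = \textsc{Rob}(M_k)$, so $\textsc{Alg}(M_k) \le \textsc{Rob}(M_k) + d(v_{M_k-1},p_{M_k-1})$. Combining with the advice-phase bound for $t = M_k-1$ and the failed loop condition, and carefully accounting for the $+d(x_{M_k},p_{M_k}) = d(v_{M_k},p_{M_k})$ term on the right of the lemma (which is exactly one of the terms appearing in the loop condition), everything should telescope into the $(1+\delta) \le (1+2\delta)$ bound. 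This is where the precise algebraic form of line \ref{line:whileadv} — in particular the presence of $d(v_{M_k},p_{M_k})$ inside it — is designed to make the bookkeeping close.

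Third, the minimizer phase $M_k < t \le T_{k+1}-1$: now each new step has $\textsc{Alg}(i) = f_i(v_i) + d(v_i,v_{i-1}) = \textsc{Rob}(i)$. The while-loop on line \ref{line:whilerob} holding at time $t$ says
$\sum_{i=M_k+1}^t \textsc{Rob}(i) + d(v_t,p_t) - d(v_{M_k},p_{M_k}) \le (1+\delta)\sum_{i=M_k+1}^t\textsc{Adv}(i) + \delta\sum_{i=T_k}^t\textsc{Adv}(i)$.
I would add $\sum_{i=M_k+1}^t \textsc{Rob}(i)$ (which equals $\sum_{i=M_k+1}^t \textsc{Alg}(i)$) to the running total from the transition step, use the loop condition to replace it by the right-hand side above, and check that the $\delta\sum\textsc{Adv}$ slack term plus the $\textsc{Adv}$ contributions absorb into $(1+2\delta)\sum_{i=T_k}^t\textsc{Adv}(i)$ while the $d(v_t,p_t) - d(v_{M_k},p_{M_k})$ adjustment correctly swaps the $+d(x_t,p_t)$ term on the right of the lemma. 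The main obstacle I anticipate is precisely this last step: making the two ``$d(v_\cdot,p_\cdot)$'' correction terms — one carried out of the transition step, one introduced by the loop condition — cancel exactly so that the final $+d(x_t,p_t) = +d(v_t,p_t)$ on the right-hand side of the lemma is accounted for without any leftover positive slack. I would manage this by strengthening the induction hypothesis to carry the exact expression $\Phi_t$ (with the $+d(x_t,p_t)$ term) rather than a looser bound, so that the correction terms are tracked symbolically rather than estimated. Edge cases ($M_k = T_k$, the last stage being truncated at $T$, and $k=1$ with $p_0 = x_0$) would be dispatched separately but follow the same template.
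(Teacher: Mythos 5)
Your proposal is correct and follows essentially the same route as the paper's proof: split the stage into the advice phase (handled by the triangle inequality on the initial jump $d(p_{T_k},v_{T_k-1})\le d(p_{T_k},p_{T_k-1})+d(p_{T_k-1},v_{T_k-1})$), the transition at $M_k$ (handled by the negated condition of line \ref{line:whileadv} together with $\textsc{Alg}(M_k)\le\textsc{Rob}(M_k)+d(v_{M_k-1},p_{M_k-1})$), and the minimizer phase (handled by the holding condition of line \ref{line:whilerob}, whose $d(v_t,p_t)-d(v_{M_k},p_{M_k})$ terms cancel the carried corrections exactly as you anticipate). The paper simply writes the resulting chain of inequalities directly for the two cases $t\le M_k-1$ and $t\ge M_k$ rather than phrasing it as an induction on a potential $\Phi_t$, but the content is identical.
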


Lemma \ref{lemma:consistent} readily implies the consistency bound stated in Proposition \ref{prop:consistent}.

\begin{proof}[Proof of Proposition \ref{prop:consistent}]
Let $K \in \N$ be the largest integer such that $T \geq T_K$.
Then,
\begin{equation}
\begin{aligned}
    \sum_{i = 1}^T \textsc{Alg}(i)
    &\leq \sum_{k = 1}^{K-1} \left( \sum_{i = T_k}^{T_{k+1}-1} \textsc{Alg}(i) + \lVert r_{T_{k+1}-1} - p_{T_{k+1}-1} \rVert - \lVert r_{T_k-1} - p_{T_k-1} \rVert \right) \\ &+ \sum_{i = T_K}^T \textsc{Alg}(i) + \lVert x_T - p_T \rVert - \lVert r_{T_K-1} - p_{T_K-1} \rVert
    \leq (1 + \delta + \gamma) \sum_{i = 1}^T \textsc{Adv}(i),
\end{aligned}
\end{equation}
where the first inequality follows from the telescoping sum and the second inequality follows using Lemma \ref{lemma:consistent}.
\end{proof}

We now provide the proof of Lemma \ref{lemma:consistent}. The proof is rather straightforward and follows along similar lines as the intuition introduced before. The proof depends on the conditions in lines \ref{line:whileadv} and \ref{line:whilerob} in the AOS algorithm.

\begin{proof}[Proof of Lemma \ref{lemma:consistent}]
Fix any $k \in \N$ and $T_k \leq t \leq M_k-1$. Then,
\begin{equation}
\begin{aligned}
    \sum_{i = T_k}^t \textsc{Alg}(i) - \lVert r_{T_k-1} - p_{T_k-1} \rVert
    &= f_{T_k}(p_{T_k}) + \lVert r_{T_k-1} - p_{T_k} \rVert + \sum_{i = T_k+1}^t \textsc{Adv}(i) - \lVert r_{T_k-1} - p_{T_k-1} \rVert \\
    &\leq \sum_{i = T_k}^t \textsc{Adv}(i) - \lVert p_t - p_t \rVert,
\end{aligned}
\end{equation}
where the inequality follows by the triangle inequality and $x_t = p_t$. This proves the lemma for $T_k \leq t \leq M_k - 1$.
Fix any $k \in \N$ and $M_k \leq t \leq T_{k+1}-1$. Then,
\begin{equation}
\begin{aligned}
    \sum_{i = T_k}^t& \textsc{Alg}(i) - \lVert r_{T_k-1} - p_{T_k-1} \rVert \\
    &= f_{T_k}(p_{T_k}) + \lVert r_{T_k-1} - p_{T_k} \rVert + \sum_{i = T_k+1}^{M_k-1} \textsc{Adv}(i) + \sum_{i = M_k}^t \textsc{Alg}(i) - \lVert r_{T_k-1} - p_{T_k-1} \rVert \\
    &\leq \sum_{i = T_k}^{M_k-1} \textsc{Adv}(i) + f_{M_k}(r_{M_k}) + \lVert p_{M_k-1} - r_{M_k} \rVert + \sum_{i = M_k+1}^t \textsc{Rob}(i) \\
    &\leq \sum_{i = T_k}^{M_k-1} \textsc{Adv}(i) + \lVert p_{M_k-1} - r_{M_k-1} \rVert + \sum_{i = M_k}^t \textsc{Rob}(i) \\
    &\leq (1 + \delta) \sum_{i = T_k}^{M_k} \textsc{Adv}(i) + \sum_{i = M_k+1}^t \textsc{Rob}(i) - \lVert r_{M_k} - p_{M_k} \rVert 
    \leq (1 + \delta + \gamma) \sum_{i = T_k}^t \textsc{Adv}(i) - \lVert r_t - p_t \rVert,
\end{aligned}
\end{equation}
where the first and second inequality follow by the triangle inequality, the third inequality follows by line \ref{line:whileadv} of Algorithm \ref{alg:main}, and the fourth inequality follows by line \ref{line:whilerob} of Algorithm \ref{alg:main} and $x_t = r_t$.
\end{proof}

\subsection{Proof of Robustness Bound}

\begin{lemma}
\label{lemma:competitive}
For all $k \in \N$ and $1 \leq t \leq M_k-T_k-1$,
\begin{equation}
    \sum_{i = T_k+1}^{T_k+t} \textsc{Adv}(i) \leq U(t) \left( \sum_{i = T_k+1}^{T_k+t} \left( \textsc{Rob}(i) + 2 \lVert v_i - r_i \rVert \right) + \lVert p_{T_k} - r_{T_k} \rVert \right),
\end{equation}
where $U(t)$ is as defined in Proposition \ref{prop:competitive}.
\end{lemma}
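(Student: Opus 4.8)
The plan is to extract, from the fact that rounds $T_k,\dots,M_k-1$ are all executed inside the \textsc{while}-loop on line~\ref{line:whileadv}, a system of linear inequalities among the per-round advice costs, robust costs, and the distances $d(p_i,v_i)$, and then to bound the ratio of $\sum_{i=T_k+1}^{T_k+t}\textsc{Adv}(i)$ to $\sum_{i=T_k+1}^{T_k+t}\textsc{Rob}(i)+d(p_{T_k},v_{T_k})$ by the optimal value of a linear program, which turns out to equal $U(t)$.

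Concretely, I would fix the stage $k$ and horizon $t\le M_k-T_k-1$ and relabel time so the stage begins at $0$, writing $a_\tau:=\textsc{Adv}(T_k+\tau)$, $r_\tau:=\textsc{Rob}(T_k+\tau)$, $d_\tau:=d(p_{T_k+\tau},v_{T_k+\tau})$. Because the loop condition on line~\ref{line:whileadv} holds with loop variable $T_k+\tau$ for every $\tau\in\{1,\dots,t\}$, subtracting $\sum_{i=T_k}^{T_k+\tau-1}\textsc{Adv}(i)$ from both sides and discarding the nonnegative term $\textsc{Adv}(T_k)$ gives
\begin{equation*}
r_\tau+d_{\tau-1}+d_\tau\ \ge\ (1+\delta)\,a_\tau+\delta\sum_{i=1}^{\tau-1}a_i,\qquad \tau=1,\dots,t.
\end{equation*}
Two further inequality families come from the structure of the problem: the $\alpha$-polyhedral property (Definition~\ref{def:polyhedral}) applied at $p_{T_k+\tau}$ gives $a_\tau\ge f_{T_k+\tau}(p_{T_k+\tau})\ge\alpha\,d_\tau$, i.e.\ $d_\tau\le a_\tau/\alpha$; and the triangle inequality, together with the fact that the switching cost of the advice (respectively the minimizer) at round $T_k+\tau$ is at most $a_\tau$ (respectively $r_\tau$), yields $d_\tau\le d_{\tau-1}+a_\tau+r_\tau$, hence $d_\tau\le d_0+\sum_{i=1}^\tau(a_i+r_i)$.

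I would then set $S:=d_0+\sum_{\tau=1}^t r_\tau$ and dispose of the degenerate case $S=0$ separately: there $d_0=0$ and every $r_\tau=0$, and a short induction on $\tau$ using the \textsc{while} inequality together with the triangle bound $d_\tau\le a_\tau$ forces $a_\tau=0$ for all $\tau$, so the claim is trivial. For $S>0$, observe that the three inequality families and the objective $\sum_\tau a_\tau$ are each positively homogeneous of degree one in $(a,r,d)$, so the ratio to be bounded is at most the value of the linear program that maximizes $\sum_\tau a_\tau$ over nonnegative $(a_\tau,r_\tau,d_\tau)$ subject to the three families and the normalization $S=1$. The last step is to identify this value with $U(t)$: after eliminating the auxiliary distance variables $d_\tau$ (the adversary drives each to the minimum of its polyhedral cap $a_\tau/\alpha$ and its accumulated-distance cap) and performing the change of variables that linearizes the resulting program, one obtains a linear program whose dual is exactly the one displayed in Proposition~\ref{prop:competitive}; weak LP duality then gives that the ratio is at most $U(t)$.

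The routine parts are the extraction of the three inequality families and the degenerate-case induction. The main obstacle is the final reduction to the $U(t)$-LP with the coefficients coming out exactly right. The delicate point is that which of the two caps on $d_\tau$ is active varies with both $\tau$ and the constraint index $s$, so one must combine the \textsc{while} inequalities over a suitable block of rounds $s,\dots,t$ and substitute the distance bounds in precisely the right pattern to produce the weights $1+\alpha(i-s+1)$, the term $\alpha\sum_{i=s}^{t-1}y_i$, and the right-hand side $1+\alpha(t-s+1)$ in the constraints defining $U(t)$; checking that the dual of the resulting program is that LP is where essentially all of the work lies.
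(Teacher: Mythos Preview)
Your high-level plan---extract linear inequalities from the \textsc{while} condition on line~\ref{line:whileadv}, cast the worst case as a linear program, and bound its value by LP duality---is exactly the paper's strategy. The degenerate case $S=0$ is handled correctly, and the three inequality families you write down are all valid. But the proposal has a real gap at precisely the point you flag as ``where essentially all of the work lies,'' and as written the plan would not yield the stated $U(t)$.

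The issue is that your parameterization in the aggregate variables $(a_\tau,r_\tau,d_\tau)$ discards the coupling between the hitting and switching components of $a_\tau$. Your triangle constraint $d_\tau\le d_{\tau-1}+a_\tau+r_\tau$ uses the full $a_\tau$, but in any actual instance only the switching part $d(p_\tau,p_{\tau-1})=a_\tau-f_\tau(p_\tau)\le a_\tau-\alpha d_\tau$ contributes to changes in $d_\tau$; simultaneously your polyhedral cap $d_\tau\le a_\tau/\alpha$ forgets the switching part. The adversary in your LP may therefore set $d_\tau$ as if \emph{all} of $a_\tau$ were switching cost while also getting credit in the \textsc{while} constraint as if $a_\tau$ contained a large hitting cost. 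This makes your feasible region strictly larger than the set of realizable instances, so your LP value can exceed $U(t)$, and the claimed identification of its dual with the program in Proposition~\ref{prop:competitive} is not automatic. Relatedly, ``the adversary drives each $d_\tau$ to the minimum of its two caps'' is not a clean elimination: which cap binds depends on the other variables and interacts across rounds, producing case structure rather than a single LP.

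The paper resolves exactly this by decomposing $a_\tau$ into a hitting-cost ratio $\sigma_\tau\ge\alpha$, an incremental distance $\Delta_\tau^\parallel$, and a residual switching term $\Delta_\tau^\perp$, so that $\textsc{Adv}$ and the triangle/polyhedral information are expressed consistently. That decomposition makes the program bilinear (through $\sigma_\tau\sum_j\Delta_j^\parallel$), and the substantive step is a separate claim showing one may take $\sigma_\tau=\alpha$ and the distances monotone ($\Delta_\tau^\parallel\ge0$ for $\tau<t$) without loss of generality; only after this reduction does the primal become the conic LP whose dual is the displayed $U(t)$ program. Your outline would need an analogue of that reduction---either the same change of variables or a direct argument that in the worst case the polyhedral cap is tight and the triangle cap is tight with only the switching portion---before the dual computation can land on the $U(t)$ coefficients.
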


\begin{proof}[Proof of Lemma \ref{lemma:competitive}]
The idea of the proof is to write the inequality as an optimization problem, where the adversary aims to invalidate the inequality by maximizing the left-hand side and minimizing the right-hand side simultaneously. We reduce the constraints in the optimization problem to a set of necessary, linear constraints, and therefore construct a linear program that represents the inequality. The objective value of the linear program is equal to the objective value of its dual, which results in the expression for $U(t)$ in~\eqref{eq:duallp}.

Fix any $k \in \N$ and $1 \leq t \leq M_k - T_k - 1$. Let
\begin{equation}
\begin{aligned}
\label{eq:defqts}
    \Delta_s^\parallel &:= \lVert p_{T_k+s} - v_{T_k+s} \rVert - \lVert p_{T_k+s-1} - v_{T_k+s-1} \rVert &\text{ for } 2 \leq s \leq t, \\
    \Delta_1^\parallel &:= \lVert p_{T_k+1} - v_{T_k+1} \rVert, &\\
    \Delta_s^\perp &:= \lVert p_{T_k+s} - p_{T_k+s-1} \rVert - \left\lvert \Delta_s^\parallel \right\rvert &\text{ for } 1 \leq s \leq t, \\
    \sigma_s &:= \frac{f_{T_k+s}(p_{T_k+s})}{\lVert p_{T_k+s} - v_{T_k+s} \rVert} &\text{ for } 1 \leq s \leq t, \\
    \gamma_s &:= \textsc{Rob}(T_k+s) + \lVert v_{T_k+s} - r_{T_k+s} \rVert + \lVert v_{T_k+s-1} - r_{T_k+s-1} \rVert &\text{ for } 2 \leq s \leq t, \\
    \gamma_1 &:= \textsc{Rob}(T_k+1) + \lVert v_{T_k+1} - r_{T_k+1} \rVert + \lVert p_{T_k} - r_{T_k} \rVert.
\end{aligned}
\end{equation}
Note that the problem may be rewritten only in terms of the definitions introduced above. In fact,
\begin{equation}
\begin{aligned}
    \sum_{i = T_k+1}^{T_k+t} \textsc{Adv}(i) - U(t) \left( \sum_{i = T_k+1}^{T_k+t} \left( \textsc{Rob}(i) + 2 \lVert v_i - r_i \rVert \right) + \lVert p_{T_k} - r_{T_k} \rVert \right) \\
    \leq \sum_{i = 1}^t \left( \Delta_i^\perp + \left\lvert \Delta_i^\parallel \right\rvert + \sigma_i \sum_{j = 1}^i \Delta_j^\parallel \right) - U(t) \sum_{i = 1}^t \gamma_i.
\end{aligned}
\end{equation}
We identify a set of necessary constraints on the quantities introduced above. Note that
\begin{equation}
\begin{aligned}
    \sum_{i = T_k}^{T_k+s} \textsc{Adv}(i) &+ 2 \textsc{Rob}(T_k+s) + 2 \lVert v_{T_k+s-1} - r_{T_k+s-1} \rVert + 2 \lVert p_{T_k+s-1} - v_{T_k+s-1} \rVert \\
    &\geq \sum_{i = T_k}^{T_k+s} \textsc{Adv}(i) + 2 \textsc{Rob}(T_k+s) + 2 \lVert p_{T_k+s-1} - r_{T_k+s-1} \rVert \\
    &\geq \sum_{i = T_k}^{T_k+s-1} \textsc{Adv}(i) + \textsc{Rob}(T_k+s) + \lVert p_{T_k+s-1} - v_{T_k+s-1} \rVert + \alpha \lVert p_{T_k+s} - r_{T_k+s} \rVert \\
    &\qquad+ \lVert p_{T_k+s} - p_{T_k+s-1} \rVert + \lVert r_{T_k+s} - r_{T_k+s-1} \rVert + \lVert p_{T_k+s-1} - r_{T_k+s-1} \rVert \\
    &\geq \sum_{i = T_k}^{T_k+s-1} \textsc{Adv}(i) + \textsc{Rob}(T_k+s) + \lVert p_{T_k+s-1} - r_{T_k+s-1} \rVert + \lVert r_{T_k+s} - p_{T_k+s} \rVert \\
    &\qquad+ \alpha \lVert p_{T_k+s} - v_{T_k+s} \rVert \\
    &\geq (1 + \delta) \sum_{i = T_k}^{T_k+s} \textsc{Adv}(i) + \alpha \lVert p_{T_k+s} - v_{T_k+s} \rVert,
\end{aligned}
\end{equation}
for all $1 \leq s \leq t$, where the second inequality follows by the triangle inequality and the third inequality follows from line \ref{line:whileadv} of Algorithm \ref{alg:main}. Therefore,
\begin{equation}
\label{eq:constraint1}
    2 \gamma_s + 2 \sum_{i = 1}^{s-1} \Delta_i^\parallel
    \geq \delta \sum_{i = 1}^s \left( \Delta_i^\perp + \left\lvert \Delta_i^\parallel \right\rvert + \sigma_i \sum_{j = 1}^i \Delta_j^\parallel \right) + \alpha \sum_{i = 1}^s \Delta_i^\parallel \text{ for } 1 \leq s \leq t.
\end{equation}
The next constraints follow directly from the definitions:
\begin{equation}
\label{eq:constraint2}
    \sum_{i = 1}^s \Delta_i^\parallel \geq 0, \;
    \Delta_s^\perp \geq -\gamma_s, \;
    \sigma_s \geq \alpha, \;
    \gamma_s \geq 0 \text{ for } 1 \leq s \leq t,
\end{equation}
where the second constraint follows by the triangle inequality. Now,
\begin{equation}
\begin{aligned}
\label{eq:optproblem}
    \sum_{i = T_k+1}^{T_k+t} \textsc{Adv}(i) - U(t) &\Bigg( \sum_{i = T_k+1}^{T_k+t} \left( \textsc{Rob}(i) + 2 \lVert v_i - r_i \rVert \right) + \lVert p_{T_k} - r_{T_k} \rVert \Bigg) \\
    \leq \sup_{\Delta^\parallel, \Delta^\perp, \sigma, \gamma} \quad &\sum_{i = 1}^t \left( \Delta_i^\perp + \left\lvert \Delta_i^\parallel \right\rvert + \sigma_s \sum_{j = 1}^i \Delta_j^\parallel \right) - U(t) \sum_{i = 1}^t \gamma_i \\
    &\text{s.t.} \quad \text{\eqref{eq:constraint1} and \eqref{eq:constraint2}}.
\end{aligned}
\end{equation}
We claim that \eqref{eq:optproblem} is equivalent to a linear problem. More formally, we will prove the following:
\begin{claim}
\label{claim:linproblem}
Fix any $\lambda = (\Delta^\parallel, \Delta^\perp, \sigma, \gamma)$, which satisfies \eqref{eq:constraint1} and \eqref{eq:constraint2}. Then, there exists $\hat{\lambda} = (\hat{\Delta}^\parallel, \hat{\Delta}^\perp, \alpha, \hat{\gamma})$, satisfying \eqref{eq:constraint1} and \eqref{eq:constraint2}, such that $\hat{\Delta}_s^\parallel \geq 0$ for $1 \leq s \leq t-1$, $\hat{\Delta}_t^\parallel \leq 0$, and the objective values of $\lambda$ and $\hat{\lambda}$ are equal.
\end{claim}
We provide the proof of Claim \ref{claim:linproblem} below. Note that Claim \ref{claim:linproblem} implies that
\begin{equation}
\begin{aligned}
\label{eq:linproblem}
    \text{\eqref{eq:optproblem} } = \max_{\Delta^\parallel, \Delta^\perp, \gamma} \quad &\sum_{i = 1}^{t-1} \left( \Delta_i^\perp + \Delta_i^\parallel + \alpha \sum_{j = 1}^i \Delta_j^\parallel \right) + \left( \Delta_t^\perp + \Delta_t^\parallel + \alpha \left( \sum_{i = 1}^{t-1} \Delta_i^\parallel - \Delta_t^\parallel \right) \right) - U(t) \sum_{i = 1}^t \gamma_i \\
    \text{s.t.} \quad 2 \gamma_s + 2 \sum_{i = 1}^{s-1} \Delta_i^\parallel &\geq \delta \sum_{i = 1}^s \left( \Delta_i^\perp + \Delta_i^\parallel + \alpha \sum_{j = 1}^i \Delta_j^\parallel \right) + \alpha \sum_{i = 1}^s \Delta_i^\parallel \text{ for } 1 \leq s \leq t-1, \\
    2 \gamma_t + 2 \sum_{i = 1}^{t-1} \Delta_i^\parallel &\geq \delta \sum_{i = 1}^{t-1} \left( \Delta_i^\perp + \Delta_i^\parallel + \alpha \sum_{j = 1}^i \Delta_j^\parallel \right) + \delta \left( \Delta_t^\perp + \Delta_t^\parallel + \alpha \left( \sum_{i = 1}^{t-1} \Delta_i^\parallel - \Delta_t^\parallel \right) \right), \\
    \Delta_s^\parallel &\geq 0, \; \; \Delta_s^\perp \geq -\gamma_s, \; \gamma_s \geq 0 \text{ for } 1 \leq s \leq t.
\end{aligned}
\end{equation}
Observe that the linear program in \eqref{eq:linproblem} is conic. Hence, the objective value of \eqref{eq:linproblem} is either zero or unbounded. Then, by strong duality, the objective value is equal to zero if and only if there exists a solution $y$ to the dual, i.e.,
\begin{equation}
\begin{aligned}
\label{eq:dualproblem}
   \delta \sum_{i = s}^t (1 + \alpha (i - s + 1)) y_i + \alpha \sum_{i = s}^{t-1} y_i &- 2 \sum_{i = s+1}^t y_i \geq 1 + \alpha (t - s + 1) &\text{ for } 1 \leq s \leq t, \\
   U(t) \geq 2 y_s &+ \delta \sum_{i = s}^t y_i - 1, \; y_s \geq 0 &\text{ for } 1 \leq s \leq t.
\end{aligned}
\end{equation}
Therefore, by the definition of $U(t)$, the objective value in \eqref{eq:linproblem} must be zero, which completes the proof of the lemma.
\end{proof}

\begin{proof}[Proof of Claim \ref{claim:linproblem}]
Fix any $\lambda = (\Delta^\parallel, \Delta^\perp, \sigma, \gamma)$, which satisfies \eqref{eq:constraint1} and \eqref{eq:constraint2}. Then, let $\lambda' = (\Delta^\parallel, {\Delta^\perp}', \alpha, \gamma)$, where
\begin{equation}
    {\Delta_s^\perp}' := \Delta_s^\perp + (\sigma_s - \alpha) \sum_{i = 1}^s \Delta_i^\parallel \text{ for } 1 \leq s \leq t.
\end{equation}
Note that $\lambda'$ satisfies \eqref{eq:constraint1} and \eqref{eq:constraint2} and the objective values of $\lambda$ and $\lambda'$ are equal. Let $1 \leq l \leq t-1$ be the smallest integer such that $\Delta_l^\parallel < 0$. We will reason inductively on the value of $l$. If such an $l$ does not exist, then let $l = t-1$ by convention and set $\lambda'' = \lambda$. If $l \leq t-1$, then let $\lambda'' = ({\Delta^\parallel}'', {\Delta^\perp}'', \alpha, \gamma'')$, where
\begin{equation}
\begin{aligned}
    {\Delta_s^\parallel}'' &:= \begin{cases}
        0 &\text{ for } s = 1, \\
        \Delta_{s-1}^\parallel &\text{ for } 2 \leq s \leq l, \\
        \Delta_l^\parallel + \Delta_{l+1}^\parallel &\text{ for } s = l+1, \\
        \Delta_s^\parallel &\text{ for } l+2 \leq s \leq t,
    \end{cases} \\
    {\Delta_s^\perp}'' &:= \begin{cases}
        0 &\text{ for } s = 1, \\
        {\Delta_{s-1}^\perp}' &\text{ for } 2 \leq s \leq l, \\
        {\Delta_l^\perp}' + {\Delta_{l+1}^\perp}' + \left\lvert \Delta_l^\parallel \right\rvert + \left\lvert \Delta_{l+1}^\parallel \right\rvert - \left\lvert {\Delta_{l+1}^\parallel}'' \right\rvert + \alpha \sum_{i = 1}^l \Delta_i^\parallel &\text{ for } s = l + 1, \\
        {\Delta_s^\perp}' &\text{ for } l + 2 \leq s \leq t,
    \end{cases} \\
    \gamma_s'' &:= \begin{cases}
        0 &\text{ for } s = 1, \\
        \gamma_{s-1} &\text{ for } 2 \leq s \leq l, \\
        \gamma_l + \gamma_{l+1} &\text{ for } s = l + 1, \\
        \gamma_s &\text{ for } l + 2 \leq s \leq t.
    \end{cases}
\end{aligned}
\end{equation}
Note that $\lambda''$ satisfies \eqref{eq:constraint1} and \eqref{eq:constraint2} and the objective values of $\lambda'$ and $\lambda''$ are equal. Moreover, for any $1 \leq s \leq l$, we know that ${\Delta_s^\parallel}'' \geq 0$. Therefore, we iteratively construct $\lambda''$ according to the procedure stated above until ${\Delta_s^\parallel}'' \geq 0$ for all $1 \leq s \leq t-1$. For the sake of notation, let $\lambda'' = ({\Delta^\parallel}'', {\Delta^\perp}'', \alpha, \gamma'')$ be the solution obtained in this way. If ${\Delta_t^\parallel}'' \leq 0$, then set $\hat{\lambda} = \lambda''$ to complete the proof of the claim. Otherwise, let $\hat{\lambda} = (\hat{\Delta}^\parallel, \hat{\Delta}^\perp, \alpha, \gamma'')$, where
\begin{equation}
\begin{aligned}
    \hat{\Delta}_s^\parallel &:= \begin{cases}
        {\Delta_s^\parallel}'' &\text{ for } 1 \leq s \leq t-1, \\
        0 &\text{ for } s = t,
    \end{cases} \\
    \hat{\Delta}_s^\perp &:= \begin{cases}
        {\Delta_s^\perp}'' &\text{ for } 1 \leq s \leq t-1, \\
        {\Delta_t^\perp}'' + {\Delta_t^\parallel}'' + \alpha {\Delta_t^\parallel}'' &\text{ for } s = t.
    \end{cases}
\end{aligned}
\end{equation}
Note that $\hat{\lambda}$ satisfies \eqref{eq:constraint1} and \eqref{eq:constraint2}, $\hat{\Delta}_t^\parallel \leq 0$ and the objective values of $\lambda''$ and $\hat{\lambda}$ are equal. This concludes the proof of the claim.
\end{proof}

Lemma \ref{lemma:competitive} is the most crucial step in the proof of Proposition \ref{prop:competitive}. The remainder of the proof of Proposition \ref{prop:competitive} largely builds on Lemma \ref{lemma:competitive}. The second statement in Proposition \ref{prop:competitive} follows by identifying one particular feasible solution to the linear optimization problem in \eqref{eq:duallp}.

\begin{proof}[Proof of Proposition \ref{prop:competitive}]
Fix any $k \in \N$. Then,
\begin{equation}
\begin{aligned}
    \sum_{i = M_k+1}^{T_{k+1}} \left( 2 \textsc{Rob}(i) + \textsc{Adv}(i) \right)
    &\geq \sum_{i = M_k+1}^{T_{k+1}} \left( \textsc{Rob}(i) + \lVert r_i - r_{i-1} \rVert + \lVert p_i - p_{i-1} \rVert \right) \\
    &\geq \sum_{i = M_k+1}^{T_{k+1}} \textsc{Rob}(i) + \lVert r_{T_{k+1}} - p_{T_{k+1}} \rVert - \lVert r_{M_k} - p_{M_k} \rVert \\
    &\geq (1 + \delta) \sum_{i = M_k+1}^{T_{k+1}} \textsc{Adv}(i) + \gamma \sum_{i = T_k}^{T_{k+1}} \textsc{Adv}(i),
\end{aligned}
\end{equation}
where the second inequality follows by the triangle inequality and the third inequality follows by line \ref{line:whilerob}. Therefore, since $M_k \geq T_k + 1$,
\begin{equation}
\label{eq:advbounded}
    \sum_{i = T_k}^{T_{k+1}} \textsc{Adv}(i) \leq \frac{2}{\gamma} \sum_{i = T_k+1}^{T_{k+1}} \textsc{Rob}(i).
\end{equation}
Let $K \in \N$ be the largest integer such that $T \geq T_K$. Then,
\begin{equation}
\begin{aligned}
\label{eq:comp1}
    \sum_{i = 1}^{T_K} \textsc{Alg}(i)
    &= \sum_{k = 1}^{K-1} \left( \sum_{i = T_k}^{T_{k+1}-1} \textsc{Alg}(i) + \lVert r_{T_{k+1}-1} - p_{T_{k+1}-1} \rVert - \lVert r_{T_k-1} - p_{T_k-1} \rVert \right) \\
    &+ \textsc{Alg}(T_K) + \lVert p_{T_K} - p_{T_K} \rVert - \lVert r_{T_K-1} - p_{T_K-1} \rVert \\
    &\leq (1 + \delta + \gamma) \sum_{i = 1}^{T_K} \textsc{Adv}(i) \\
    &\leq (1 + \delta + \gamma) \sum_{k = 1}^{K-1} \sum_{i = T_k}^{T_{k+1}} \textsc{Adv}(i) \\
    &\leq \frac{2 + 2 \delta + 2 \gamma}{\gamma} \sum_{i = 1}^{T_K} \textsc{Rob}(i),
\end{aligned}
\end{equation}
where the first inequality follows by Lemma \ref{lemma:consistent} and the third inequality follows by \eqref{eq:advbounded}. Also, if $T \geq M_K$, then
\begin{equation}
\begin{aligned}
\label{eq:comp2a}
    \sum_{i = T_K+1}^T \textsc{Alg}(i)
    &= \sum_{i = T_K+1}^{M_K-1} \textsc{Adv}(i) + f_{M_K}(v_{M_K}) + \lVert p_{M_K-1} - r_{M_K} \rVert + \sum_{i = M_K+1}^T \textsc{Rob}(i) \\
    &\leq \sum_{i = T_K+1}^{M_K-1} \textsc{Adv}(i) + \lVert p_{M_K-1} - r_{M_K-1} \rVert + \sum_{i = M_K}^T \textsc{Rob}(i) \\
    &\leq \sum_{i = T_K+1}^{M_K-1} \textsc{Adv}(i) + \sum_{i = 1}^{M_K-1} \left( \lVert p_i - p_{i-1} \rVert + \lVert r_i - r_{i-1} \rVert \right) + \sum_{i = M_K}^T \textsc{Rob}(i) \\
    &\leq 2 \sum_{i = T_K+1}^{M_K-1} \textsc{Adv}(i) + \sum_{i = 1}^{T_K} \textsc{Adv}(i) + \sum_{i = 1}^T \textsc{Rob}(i) \\
    &\leq 2 U(\infty) \left( \sum_{i = T_K+1}^{M_K-1} \left( \textsc{Rob}(i) + \lVert v_i - r_i \rVert \right) + \lVert p_{T_K} - r_{T_K} \rVert \right) + \sum_{i = 1}^{T_K} \textsc{Adv}(i) + \sum_{i = 1}^T \textsc{Rob}(i) \\
    &\leq 2 U(\infty) \left( \sum_{i = T_K+1}^{M_K-1} \left( \textsc{Rob}(i) + \lVert v_i - r_i \rVert \right) + \sum_{i = 1}^{T_K} \left( \lVert p_i - p_{i-1} \rVert + \lVert r_i - r_{i-1} \rVert \right) \right) \\
    &+ \sum_{i = 1}^{T_K} \textsc{Adv}(i) + \sum_{i = 1}^T \textsc{Rob}(i) \\
    &\leq (2 U(\infty) + 1) \sum_{i = 1}^{T_K} \textsc{Adv}(i) + (2U(\infty) + 1) \sum_{i = 1}^T \textsc{Rob}(i) + 2 U(\infty) \sum_{i = 1}^T \lVert v_i - r_i \rVert \\
    &\leq \left( \frac{4 U(\infty) + 2}{\gamma} + 2 U(\infty) + 1 \right) \sum_{i = 1}^{T} \textsc{Rob}(i) + 2 U(\infty) \sum_{i = 1}^T \lVert v_i - r_i \rVert,
\end{aligned}
\end{equation}
where the first and second inequalities follow by the triangle inequality, the fourth inequality follows by Lemma \ref{lemma:competitive}, the fifth inequality follows again by the triangle inequality and the seventh inequality follows by \eqref{eq:advbounded}. If instead $T \leq M_K - 1$, then
\begin{equation}
\begin{aligned}
\label{eq:comp2b}
    \sum_{i = T_K+1}^T \textsc{Alg}(i)
    &= \sum_{i = T_K+1}^T \textsc{Adv}(i) \\
    &\leq U(\infty) \left( \sum_{i = T_K+1}^T \left( \textsc{Rob}(i) + \lVert v_i - r_i \rVert \right) + \lVert p_{T_K} - r_{T_K} \rVert \right) \\
    &\leq U(\infty) \left( \sum_{i = T_K+1}^T \left( \textsc{Rob}(i) + \lVert v_i - r_i \rVert \right) + \sum_{i = 1}^{T_K} \left( \lVert p_i - p_{i-1} \rVert + \lVert r_i - r_{i-1} \rVert \right) \right) \\
    &\leq U(\infty) \sum_{i = 1}^{T_K} \textsc{Adv}(i) + U(\infty) \sum_{i = 1}^T \left( \textsc{Rob}(i) + \lVert v_i - r_i \rVert \right) + U(\infty) \sum_{i = 1}^T \lVert v_i - r_i \rVert \\
    &\leq \left( \frac{2 U(\infty)}{\gamma} + U(\infty) \right) \sum_{i = 1}^{T} \textsc{Rob}(i) + U(\infty) \sum_{i = 1}^T \lVert v_i - r_i \rVert,
\end{aligned}
\end{equation}
where the first inequality follows by Lemma \ref{lemma:competitive}, the second inequality follows by the triangle inequality and the fourth inequality follows by \eqref{eq:advbounded}. The proof of the first statement is completed by adding \eqref{eq:comp1} and either \eqref{eq:comp2a} or \eqref{eq:comp2b}.

To see the proof of the second statement, assume that $(\alpha, \delta)$ is feasible and $\frac{2}{\alpha \delta} \in \N$. Fix any $t \geq 1$. Note that
\begin{equation}
\begin{aligned}
    U &= \alpha \left( \frac{2}{\alpha + \delta (1 + \alpha)} \right)^{2 / \alpha \delta} + \frac{2}{(2 - \alpha - \delta (1 + \alpha))^2} \left( \alpha \left( \frac{2}{\alpha + \delta (1 + \alpha)} \right)^{2 / \alpha \delta} - \frac{2 - \alpha}{\delta} + 1 \right), \\
    y_{t-s} &= \left( \frac{2}{\alpha + \delta (1 + \alpha)} \right)^s\left( \frac{2 - (s - 1)^+ \alpha \delta}{2 \delta} \right)^+ \text{ for } 0 \leq s \leq t-1,
\end{aligned}
\end{equation}
is feasible for the optimization problem in \eqref{eq:duallp}. Therefore, $U(t) \leq \tilde{U}$ for all $t \geq 1$. This completes the proof of the proposition.
\end{proof}

\section{Proof of Theorem \ref{thm:lowerbound}}
\label{app:proof-lower}

In this section, we will prove Theorem \ref{thm:lowerbound} by proving the next, stronger proposition.

\begin{proposition}
\label{prop:lowerbound}
Fix any $\delta > 0$. Let $\mathcal{A}$ be any deterministic algorithm for the non-convex optimization problem in \eqref{eq:prob-desc}. If there exists $0<\varepsilon < \delta$ such that $\mathcal{A}$ is $(1 + \varepsilon)$-consistent, then $\mathcal{A}$ is at least $\sup_{t \geq 1} L(t)$-robust, where $L(t)$ is as defined in \eqref{eq:lboptproblem}. Moreover, assume that $\frac{2 - \alpha (1 - \delta^2)}{\alpha \delta (1 + \delta)} \in \N$, then
\begin{equation}\label{eq:prop18-2}
    \sup_{t \geq 1} L(t) \geq \frac{\alpha \delta}{4} \left( \frac{2}{\alpha + \delta (1 + \alpha)} \right)^{\frac{2 - \alpha (1 - \delta^2)}{\alpha \delta (1 + \delta)}} - \mathcal{O}(1),
\end{equation}
where the $\mathcal{O}$-notation holds in the limit $\alpha, \delta \to 0$.
\end{proposition}

Note that Theorem \ref{thm:lowerbound} follows directly from Proposition \ref{prop:lowerbound}. Also, note that the assumption that $\frac{2 - \alpha (1 - \delta^2)}{\alpha \delta (1 + \delta)} \in \N$ is again without loss of generality, but prevents rounding symbols from appearing in the notation. The proof of Proposition \ref{prop:lowerbound} depends on the following idea: If $\Delta$ in \eqref{eq:lboptproblem} somehow represents the movement of the advice in one dimension, then the constraints are sufficient to let any algorithm follow the advice exactly, otherwise the algorithm would violate the assumption of $(1 + \delta)$-consistency. Then, we want the advice trajectory which has the maximum cost given these constraints.
This results in the maximization problem in \eqref{eq:lboptproblem}. It turns out that a worst-case solution to \eqref{eq:lboptproblem} and hence a worst-case instance is when the advice moves away exponentially fast from the minimizer.

\begin{proof}[Proof of Proposition \ref{prop:lowerbound}]
Fix any $\delta > 0$, and let $\mathcal{A}$ be any deterministic algorithm. Assume, for the sake of contradiction, that there exists $\varepsilon < \delta$ such that $\mathcal{A}$ is $(1 + \varepsilon)$-consistent and $L$-robust, where $L < \sup_{t \geq 1} L(t)$. 
Then, let $t \geq 1$ be such that $L(t) > L$ and $\Delta$ be an optimal solution to \eqref{eq:lboptproblem}. We will construct an instance in the vector space $X = \R$, where $\lVert x - y \rVert = \lvert x - y \rvert$ and $x_0 = v_0 = p_0 = 0$ . Suppose $v_s = -1$, $p_s = \sum_{i = 1}^s \Delta_i$ and $f_s(x) = \alpha \lvert x - v_s \rvert + \infty \cdot \boldsymbol{1}_{ x \not\in \{ v_s, p_s \}}$ for $1 \leq s \leq t$. Note that $x_s \in \{ v_s, p_s \}$ for all $1 \leq s \leq t$; otherwise, the algorithm would incur an infinite cost, while the algorithm following the minimizer has a finite cost and the proof of the proposition follows trivially. Even more strongly, we claim that $x_s = p_s$ for all $1 \leq s \leq t$. Assuming the claim to be true, observe that
\begin{equation}
\begin{aligned}
    \sum_{i = 1}^t \textsc{Alg}(i)
    &= \sum_{i = 1}^t \textsc{Adv}(i) 
    = \sum_{i = 1}^t \left( \Delta_i + \alpha \left( 1 + \sum_{j = 1}^i \Delta_j \right) \right) \\
    &= L(t)
    = L(t) \sum_{i = 1}^t \textsc{Rob}(i)
    \geq L(t) \sum_{i = 1}^t \textsc{Opt}(i)
    > L \sum_{i = 1}^t \textsc{Opt}(i),
\end{aligned}
\end{equation}
which violates $L$-robustness. This is a contradiction and the proof of the proposition follows. We now provide the proof of the claim that $x_s = p_s$ for all $1 \leq s \leq t$. For the sake of contradiction, let $1 \leq l \leq t$ be the smallest integer such that $x_l = v_l$. As the adversary, we modify the instance for $s \geq l+1$ such that $v_s = p_s = p_l$, $f_s(x) = \infty \cdot \boldsymbol{1}_{ x \not\in \{ p_l \}}$ for $l+1 \leq s \leq t+1$. Note that $x_s = p_l$ for $l+1 \leq s \leq t+1$; otherwise, the algorithm would incur an infinite cost, while the algorithm following the minimizers has a finite cost and the proof of the proposition follows trivially. Then,
\begin{equation}
\begin{aligned}
    \sum_{i = 1}^{t+1} \textsc{Alg}(i)
    &= \sum_{i = 1}^{l-1} \textsc{Adv}(i) + \lvert v_l - p_{l-1} \rvert + \lvert p_l - v_l \rvert
    = \sum_{i = 1}^{l-1} \textsc{Adv}(i) + \left( 1 + \sum_{i = 1}^{l-1} \Delta_i \right) + \left( 1 + \sum_{i = 1}^l \Delta_i \right) \\
    &\geq \sum_{i = 1}^{l-1} \textsc{Adv}(i) + \delta \sum_{i = 1}^l \left( \Delta_i + \alpha \left( 1 + \sum_{j = 1}^i \Delta_j \right) \right) + \Delta_l + \alpha \left( 1 + \sum_{i = 1}^l \Delta_i \right) \\
    &= \sum_{i = 1}^{l-1} \textsc{Adv}(i) + \delta \sum_{i = 1}^l \textsc{Adv}(i) + \textsc{Adv}(l)
    = (1 + \delta) \sum_{i = 1}^{t+1} \textsc{Adv}(i),
\end{aligned}
\end{equation}
where the inequality follows by the first constraint in \eqref{eq:lboptproblem}. This violates the assumption that there exists $\varepsilon < \delta$ such that $\mathcal{A}$ is $(1 + \varepsilon)$-consistent. Now, to see the second statement~\eqref{eq:prop18-2} in the proposition, note that
\begin{equation}
    \Delta_s = \frac{2 - \alpha(1 - \delta^2) - s \alpha \delta (1 + \delta)}{2} \left( \frac{2}{\alpha + \delta (1 + \alpha)} \right)^s \text{ for } 1 \leq s \leq t,
\end{equation}
is feasible for the optimization problem in \eqref{eq:lboptproblem} for $t = \frac{2 - \alpha (1 - \delta^2)}{\alpha \delta (1 + \delta)}$. Also, with this definition,
\begin{equation}
\begin{gathered}
    \sum_{i = 1}^t \left( \Delta_i + \alpha \left( 1 + \sum_{j = 1}^i \Delta_j \right) \right)
    = \frac{\alpha \delta (1 + \delta) (2 + 3 \alpha - \delta (1 + \alpha))}{(2 - \alpha - \delta (1 + \alpha))^3} \left( \frac{2}{\alpha + \delta (1 + \alpha)} \right)^{\frac{2 - \alpha (1 - \delta^2)}{\alpha \delta (1 + \delta)}} \\
    + \frac{2 \alpha^2 (1 + \delta) (2 - 4 \delta + \delta^2 - \alpha (1 + \delta)) + 2 \alpha (2 - \delta) (2 - \delta^2)}{(2 - \alpha - \delta (1 + \alpha))^3}
    - \frac{2 (2 - \delta)^2 (2 + \delta)}{(1 + \delta)(2 - \alpha - \delta (1 + \alpha))^3}.
\end{gathered}
\end{equation}
This completes the proof of the proposition.
\end{proof}

\section{Proof of Proposition \ref{prop:frugal_use} \label{sec:prooffrugal}}

Consider a problem instance $(x_0, f_1, \ldots, f_T, T)$, and let $x_0, x_1, \ldots, x_T$ be the decisions made by algorithm $\mathcal{A}$. Then the cost of the frugal algorithm $\mathcal{A}'$ is
\begin{align}
    &\sum_{t=1}^T f_t(x_{k \cdot \lfloor t/k \rfloor}) + \|x_{k \cdot \lfloor t/k \rfloor} - x_{k \cdot \lfloor t-1/k \rfloor}\| \\
    &\leq \sum_{t=1}^T f_t(x_t) + |f_t(x_t) - f_t(x_{k \cdot \lfloor t/k \rfloor})| + \sum_{i = 1}^{\lfloor T/k \rfloor} \|x_{ki} - x_{k(i-1)}\| \\
    &\leq \sum_{t=1}^T f_t(x_t) + L\|x_t - x_{k \cdot \lfloor t/k \rfloor}\| + \sum_{i = 1}^{\lfloor T/k \rfloor} \|x_{ki} - x_{k(i-1)}\| \label{ineq:lipschitz}\\
    &\leq \sum_{t=1}^T \left(f_t(x_t) + L\sum_{\tau = k\cdot\lfloor t/k \rfloor + 1}^t \|x_\tau - x_{\tau-1}\|\right) + \sum_{i = 1}^{\lfloor T/k \rfloor} \|x_{ki} - x_{k(i-1)}\| \label{ineq:tri_ineq_frugal}\\
    &\leq \max\{1, kl\} \left(\sum_{t=1}^T f_t(x_t) + \|x_t - x_{t-1}\|\right) \label{ineq:frugal_final}
\end{align}
where \eqref{ineq:lipschitz} follows from the Lipschitz bound on the functions $f_t$, \eqref{ineq:tri_ineq_frugal} follows by the triangle inequality, and \eqref{ineq:frugal_final} follows by virtue of the inner sum containing any particular $\|x_\tau - x_{\tau - 1}\|$ at most $k$ times.

\section{Proof of Theorem \ref{thm:memoryless}}
\label{sec:proofmemoryless}

\begin{proof}
We prove Theorem \ref{thm:memoryless} in the vector space $X = \R^2$, where $\lVert x - y \rVert = \lVert x - y \rVert_2$ is the Euclidian distance.
Let $r_1 \leq r_2 < \sqrt{2} / 2$ be two arbitrary numbers to be decided later. Let $x_0 = (0, r_2)$ and $f_1(x, y) = \alpha \lvert x - v_t \rvert + L y$. We let $L \to \infty$ such that any algorithm must move onto the x-axis to achieve a bounded cost (i.e. $x_{1,2} = 0$). Also, let $v_1 = -\frac{r_2^2 - r_1^2}{2 r_1}$ and $\tilde{x}_1 = \sqrt{1 - r_2^2}$. We distinguish two cases.
\begin{enumerate}[(i)]
    \item Assume that $x_{1,1} > r_1$. Then, note that $\lVert x_1 - v_1 \rVert_2 \geq \frac{r_1^2 + r_2^2}{2 r_1} = \lVert x_0 - v_1 \rVert_2$. In other words, the online algorithm moves further away from the minimizer. At time $t = 2$, we rotate the function $f_1$ around $v_1$ and position $\tilde{x}_2$ such that the setup at the next time step is exactly equivalent (up to rotation and scaling) to the setup at time $t = 1$. As $\mathcal{A}$ is memoryless and scale- and rotation invariant, $\mathcal{A}$ will move exactly equivalent as at time $t = 1$ and therefore again move further away from the minimizer. We repeat this setup infinitely often. At each time step the algorithm $\mathcal{A}$ incurs a cost of at least $\sqrt{r_1^2 + r_2^2}$. In contrast, the optimal algorithm moves to $v_1$ at time $t = 1$ and incurs a one-time moving cost of $\frac{r_1^2 + r_2^2}{2 r_1}$. The competitive ratio of $\mathcal{A}$ is therefore unbounded, regardless of the values of $r_1$ and $r_2$.
    \item Assume that $x_{1,1} \leq r_1$. At time $t = 2$, we rotate the function $f_1$ around $\tilde{x}_1$ and position $v_2$ such that the setup at the next time step is exactly equivalent (up to rotation and scaling) to the setup at time $t = 1$. As $\mathcal{A}$ is memoryless and scale- and rotation invariant, $\mathcal{A}$ will move exactly equivalent as at time $t = 1$. We repeat this setup infinitely often. At time $t = 1$, the algorithm incurs a cost of $\sqrt{x_{1,1}^2 + r_2^2} + \alpha \left\lvert x_{1,1} + \frac{r_2^2 - r_1^2}{2 r_1} \right\rvert$. Moreover, the distance to $\tilde{x}_1$ changes from $\lVert x_0 - \tilde{x}_1 \rVert_2 = 1$ to $\lVert x_1 - \tilde{x}_1 \rVert_2 = \sqrt{1 - r_2^2} - x_{1,1} > 0$. Hence, at time $t = 2$ a setup is presented where each distance is scaled by a factor of $\sqrt{1 - r_2^2} - x_{1,1}$. If $\sqrt{1 - r_2^2} - x_{1,1} \neq 1$, then the cost of the algorithm from time $t = 1$ to $T$ is
    \begin{multline}
        \sum_{t = 1}^T \left( \sqrt{1 - r_2^2} - x_{1,1} \right)^{t-1} \left( \sqrt{x_{1,1}^2 + r_2^2} + \alpha \left\lvert x_{1,1} + \frac{r_2^2 - r_1^2}{2 r_1} \right\rvert \right) \\
        = \frac{\left( 1 - \left( \sqrt{1 - r_2^2} - x_{1,1} \right)^T \right) \left( \sqrt{x_{1,1}^2 + r_2^2} + \alpha \left\lvert x_{1,1} + \frac{r_2^2 - r_1^2}{2 r_1} \right\rvert \right)}{1 - \left( \sqrt{1 - r_2^2} - x_{1,1} \right)}.
    \end{multline}
    The cost of the optimal algorithm from time $t = 1$ to $T$ is
    \begin{multline}
        1 + \sum_{t = 1}^T \left( \sqrt{1 - r_2^2} - x_{1,1} \right)^{t-1} \alpha \left( \frac{r_1^2 + r_2^2}{2 r_1} + \sqrt{1 - r_2^2} \right) \\
        = 1 + \frac{\left( 1 - \left( \sqrt{1 - r_2^2} - x_{1,1} \right)^T \right) \alpha \left( \frac{r_1^2 + r_2^2}{2 r_1} + \sqrt{1 - r_2^2} \right)}{1 - \left( \sqrt{1 - r_2^2} - x_{1,1} \right)}.
    \end{multline}
    Let $r_1 = \alpha$ and $r_2 = \sqrt{2 \alpha}$. We distinguish two more cases. If $\sqrt{1 - r_2^2} - x_{1,1} \geq 1$, then, as $T \to \infty$, the competitive ratio of $\mathcal{A}$ is at least
    \begin{equation}
        \textsc{CR} \geq \frac{\sqrt{x_{1,1}^2 + r_2^2} + \alpha \left\lvert x_{1,1} + \frac{r_2^2 - r_1^2}{2 r_1} \right\rvert}{\alpha \left( \frac{r_1^2 + r_2^2}{2 r_1} + \sqrt{1 - r_2^2} \right)}
        \geq \frac{\sqrt{2 - 2 \sqrt{1 - 2 \alpha}}}{\alpha \left( \frac{\alpha^2 + 2 \alpha}{2 \alpha} + \sqrt{1 - 2 \alpha} \right)}
        = \frac{1}{\sqrt{2 \alpha}} - o\left( \frac{1}{\sqrt{\alpha}} \right).
    \end{equation}
    If $\sqrt{1 - r_2^2} - x_{1,1} < 1$, then, as $T \to \infty$, the competitive ratio of $\mathcal{A}$ is at least
    \begin{multline}
        \textsc{CR} \geq \frac{\sqrt{x_{1,1}^2 + r_2^2} + \alpha \left( x_{1,1} + \frac{r_2^2 - r_1^2}{2 r_1} \right)}{1 - \left( \sqrt{1 - r_2^2} - x_{1,1} \right) + \alpha \left( \frac{r_1^2 + r_2^2}{2 r_1} + \sqrt{1 - r_2^2} \right)} \\
        \geq \frac{ \sqrt{\alpha^2 + 2 \alpha} + \alpha \left( \alpha + \frac{2 \alpha - \alpha^2}{2 \alpha} \right)}{1 - \left( \sqrt{1 - 2 \alpha} - \alpha \right) + \alpha \left( \frac{\alpha^2 + 2 \alpha}{2 \alpha} + \sqrt{1 - 2 \alpha} \right)}
        = \frac{1}{\sqrt{8 \alpha}} - o\left( \frac{1}{\sqrt{\alpha}} \right).
    \end{multline}
\end{enumerate}
\end{proof}

\section{Proof of Theorem \ref{thm:onedim}}
\label{sec:proofonedim}

\begin{proof}[\unskip\nopunct]
Let $y \in \R^T$ be an arbitrary solution and define the potential function $\phi(y_t, x_t): = c \lvert y_t - x_t \rvert$ for $c > 0$. Note that if we prove that
\begin{equation}
    f_t(x_t) + \lvert x_t - x_{t-1} \rvert + \phi(y_t, x_t) - \phi(y_{t-1}, x_{t-1})
    \leq \textsc{CR} \left( f_t(y_t) + \lvert y_t - y_{t-1} \rvert \right),
\end{equation}
for all $1 \leq t \leq T$, then, if we sum over $t$, we obtain
\begin{equation}
\begin{aligned}
    \sum_{i = 1}^T \left( f_t(x_t) + \lvert x_t - x_{t-1} \rvert \right)
    &\leq \textsc{CR} \sum_{i = 1}^T \left( f_t(y_t) + \lvert y_t - y_{t-1} \rvert \right) - \phi(y_T, x_T) \\
    &\leq \textsc{CR} \sum_{i = 1}^T \left( f_t(y_t) + \lvert y_t - y_{t-1} \rvert \right),
\end{aligned}
\end{equation}
which proves that $x$ is \textsc{CR}-competitive \emph{with respect to} $y$. We will apply this technique to $y = o$ and $y = p$ separately to find the competitive ratio with respect to the hindsight optimal and the advice, respectively. Applying the triangle equality to $\phi(y_t, x_t) - \phi(y_{t-1}, x_{t-1})$ yields
\begin{multline}
    \phi(y_t, x_t) - \phi(y_{t-1}, x_{t-1})
    = c \left( \lvert y_t - x_t \rvert - \lvert y_{t-1} - x_{t-1} \rvert \right) \\
    \leq c \left( \lvert y_t - y_{t-1} \rvert + \lvert y_t - x_t \rvert - \lvert y_t - x_{t-1} \rvert \right) \\
    \leq \textsc{CR} \cdot \lvert y_t - y_{t-1} \rvert + c \left( \lvert y_t - x_t \rvert - \lvert y_t - x_{t-1} \rvert \right),
\end{multline}
where we assume that $\textsc{CR} \geq c$. This means it is sufficient to prove that
\begin{equation}
\label{eq:suffcondcr}
    f_t(x_t) + \lvert x_t - x_{t-1} \rvert + c \left( \lvert y_t - x_t \rvert - \lvert y_t - x_{t-1} \rvert \right)
    \leq \textsc{CR} \cdot f_t(y_t).
\end{equation}
We verify equation \eqref{eq:suffcondcr} in the case that $y = o$ first. Let $c = 1 + \underline{\beta}^{-1}$ and $\textsc{CR} = 1 + (2 + \underline{\beta}^{-1}) \bar{\beta}$ as in the theorem. We distinguish two cases.
\begin{enumerate}
    \item
Assume that $f_t(x_t) \leq f_t(o_t)$. Note that in any case
\begin{equation}
    \lvert x_t - x_{t-1} \rvert
    \leq \lvert x(\bar{\lambda}) - x_{t-1} \rvert
    \leq \bar{\beta} f_t(x(\bar{\lambda}))
    \leq \bar{\beta} f_t(x_t),
\end{equation}
where the last inequality follows by the convexity of $f_t$ and hence, by applying the triangle inequality,
\begin{equation}
\begin{aligned}
    f_t(x_t) + \lvert x_t - x_{t-1} \rvert &+ c \left( \lvert o_t - x_t \rvert - \lvert o_t - x_{t-1} \rvert \right) \\
    &\leq f_t(x_t) + (1 + c) \lvert x_t - x_{t-1} \rvert \\
    &\leq \left( 1 + (1 + c) \bar{\beta} \right) f_t(x_t)
    \leq \textsc{CR} \cdot f_t(o_t),
\end{aligned}
\end{equation}
which verifies equation \eqref{eq:suffcondcr}.

    \item
Assume that $f_t(x_t) > f_t(o_o)$. Note that in this case $x_t$ did not reach $v_t$ which means that $\underline{\lambda} < 1$ and $\lvert x(\underline{\lambda}) - x_{t-1} \rvert = \underline{\beta} f_t(x(\underline{\lambda}))$. Thus,
\begin{equation}
    f_t(x_t)
    \leq f_t(x(\underline{\lambda}))
    = \frac{\lvert x(\underline{\lambda}) - x_{t-1}\rvert}{\underline{\beta}}
    \leq \frac{\lvert x_t - x_{t-1}\rvert}{\underline{\beta}}.
\end{equation}
Moreover, since $f_t(x_t) > f_t(o_t)$, $x_t$ must have moved closer to $o_t$ during its entire move and thus
\begin{equation}
    c \left( \lvert o_t - x_t \rvert - \lvert o_t - x_{t-1} \rvert \right)
    = -c \lvert x_t - x_{t-1} \rvert.
\end{equation}
Therefore,
\begin{equation}
    f_t(x_t) + \lvert x_t - x_{t-1} \rvert + c \left( \lvert o_t - x_t \rvert - \lvert o_t - x_{t-1} \rvert \right)
    \leq \left( 1 + \underline{\beta}^{-1} - c \right) \lvert x_t - x_{t-1} \rvert
    \leq 0,
\end{equation}
which verifies equation \eqref{eq:suffcondcr}.
\end{enumerate}
We now continue to verify equation \eqref{eq:suffcondcr} in the case that $y = p$. Let $c = 1 + \bar{\beta}^{-1}$ and $\textsc{CR} = 1 + (2 + \bar{\beta}^{-1}) \underline{\beta}$ as in the theorem. We distinguish three cases.
\begin{enumerate}
    \item
Assume that $f_t(x_t) \leq f_t(p_t)$ and $x_t \neq p_t$. Either $p_t$ is on the opposite side of $v_t$ as $x_t$ or $p_t$ is on the same side of $v_t$ as $x_t$. If $p_t$ is on the same side, then it must be that $x_t = x(\underline{\lambda})$, since by decreasing $\lambda$, the point $x(\lambda)$ only moves closer to $p_t$. Then, by applying the triangle inequality,
\begin{equation}
\begin{aligned}
    f_t(x_t) + \lvert x_t - x_{t-1} \rvert + c \left( \lvert p_t - x_t \rvert - \lvert p_t - x_{t-1} \rvert \right)
    &\leq f_t(x_t) + (1 + c) \lvert x_t - x_{t-1} \rvert \\
    &\leq \left( 1 + (1 + c) \underline{\beta} \right) f_t(x_t)
    \leq \textsc{CR}\cdot  f_t(p_t),
\end{aligned}
\end{equation}
which verifies equation \eqref{eq:suffcondcr}. If $p_t$ is on the opposite side of $v_t$, then $x_t$ must have moved closer to $p_t$ during its entire move and thus
\begin{equation}
    c \left( \lvert p_t - x_t \rvert - \lvert p_t - x_{t-1} \rvert \right)
    = -c \lvert x_t - x_{t-1} \rvert.
\end{equation}
Therefore,
\begin{equation}
    f_t(x_t) + \lvert x_t - x_{t-1} \rvert + c \left( \lvert p_t - x_t \rvert - \lvert p_t - x_{t-1} \rvert \right)
    \leq f_t(x_t)
    \leq \textsc{CR} \cdot f_t(p_t),
\end{equation}
which verifies equation \eqref{eq:suffcondcr}.

    \item
Assume that $x_t = p_t$. Then,
\begin{equation}
    f_t(x_t) + \lvert x_t - x_{t-1} \rvert + c \left( \lvert p_t - x_t \rvert - \lvert p_t - x_{t-1} \rvert \right)
    \leq f_t(p_t)
    \leq \textsc{CR} \cdot f_t(p_t)
\end{equation}
which verifies equation \eqref{eq:suffcondcr}.

    \item
Assume that $f_t(x_t) > f_t(p_t)$. Then it must be that $x_t = x(\bar{\lambda})$, since by increasing $\lambda$, the point $x(\lambda)$ only moves closer to $p_t$. Moreover, $x_t$ did not reach $v_t$ which means that $\bar{\lambda} < 1$ and thus
\begin{equation}
    f_t(x_t) = f_t(x(\bar{\lambda})) = \frac{\lvert x(\bar{\lambda}) - x_{t-1} \rvert}{\bar{\beta}} = \frac{\lvert x_t - x_{t-1} \rvert}{\bar{\beta}}. 
\end{equation}
Also, since $f_t(x_t) > f_t(p_t)$, $x_t$ must have moved closer to $o_t$ during its entire move and thus
\begin{equation}
    c \left( \lvert p_t - x_t \rvert - \lvert p_t - x_{t-1} \rvert \right)
    = -c \lvert x_t - x_{t-1} \rvert.
\end{equation}
Therefore,
\begin{equation}
    f_t(x_t) + \lvert x_t - x_{t-1} \rvert + c \left( \lvert p_t - x_t \rvert - \lvert p_t - x_{t-1} \rvert \right)
    = \left( 1 + \bar{\beta}^{-1} - c \right) \lvert x_t - x_{t-1} \rvert
    \leq 0,
\end{equation}
which verifies equation \eqref{eq:suffcondcr}.
\end{enumerate}
This completes the proof of the theorem by applying Lemma \ref{lemma:ftpalg}.
\end{proof}

\section{Proof of Theorem \ref{thm:lbonedim}}
\label{sec:prooflbonedim}

\begin{proof}[\unskip\nopunct]
Let $\mathcal{A}$ be any deterministic algorithm for the convex, one-dimensional optimization problem and fix any $0 < \delta < 1/2$. Let $x$ denote the decisions of $\mathcal{A}$. We construct an instance. Let $\tilde{x}_0 = x_0 = 0$, $f_1(x) = 2 \delta \lvert x - 1 \rvert$, the advice $\tilde{x}_1 = 1$ and $T = 2$. We distinguish two cases.
\begin{enumerate}
    \item
Assume that $x_1 \geq \frac{1}{2}$. Let $f_2(x) = \lvert x \rvert$ and the advice $\tilde{x}_2 = 0$. Then, the optimal solution incurs a cost of $2 \delta$ with $o_1 = o_2 = 0$. The algorithm $\mathcal{A}$ has a cost of at least $1$ and hence the competitive ratio is at least $1 / (2 \delta)$.

    \item
Assume that $x_1 < \frac{1}{2}$. Let $f_2(x) = \lvert x - 1 \rvert$ and the advice $\tilde{x}_2 = 1$. Then, the optimal solution incurs a cost of $1$ with $o_1 = o_2 = 1$. However, the algorithm $\mathcal{A}$ has a cost of at least $1 + 2 \delta (1 - x_1) > 1 + \delta$ even though the predictions are perfect. Hence, this case does not satisfy the assumption that $\mathcal{A}$ is $(1 + \delta)$-consistent.
\end{enumerate}
\end{proof}

\end{document}